\def\eqref#1{equation~\ref{#1}}
\def\1{\bm{1}}
\DeclareMathAlphabet{\mathsfit}{\encodingdefault}{\sfdefault}{m}{sl}
\SetMathAlphabet{\mathsfit}{bold}{\encodingdefault}{\sfdefault}{bx}{n}
\newcommand{\E}{\mathbb{E}}
\newcommand{\KL}{D_{\mathrm{KL}}}
\title{Pairwise Augmented GANs with \\ Adversarial Reconstruction Loss}
\author{Aibek Alanov$^{1, 2, 3}$\hspace{0.02mm}\thanks{Equal contribution.}\hspace{1.55mm},
Max Kochurov$^{1, 2}$\footnotemark[1]\hspace{1.55mm}, 
Daniil Yashkov$^5$, 
Dmitry Vetrov$^{1, 3, 4}$ \\
$^1$Samsung AI Center in Moscow \\
$^2$Skolkovo Institute of Science and Technology \\
$^3$National Research University Higher School of Economics \\
$^4$Joint Samsung-HSE lab \\
$^5$Federal Research Center "Informatics and Management" of the Russian Academy of Sciences \\
\texttt{aibek.alanov@bayesgroup.ru, m.kochurov@bayesgroup.ru} \\ \texttt{daniil.yashkov@phystech.edu, vetrovd@yandex.ru}
}
\newcommand{\lblsec}[1]{\label{sec:#1}}
\newcommand{\delim}{\\  \hline }
\begin{document}

\maketitle

\begin{abstract}
We propose a novel autoencoding model called Pairwise Augmented GANs. We train a generator and an encoder jointly and in an adversarial manner. The generator network learns to sample realistic objects. In turn, the encoder network at the same time is trained to map the true data distribution to the prior in  latent space. To ensure good reconstructions, we introduce an \emph{augmented} adversarial reconstruction loss. Here we train a discriminator to distinguish two types of pairs: an object with its augmentation and the one with its reconstruction. We show that such adversarial loss compares objects based on the content rather than on the exact match. We experimentally demonstrate that our model generates samples and reconstructions of quality competitive with state-of-the-art on datasets MNIST, CIFAR10, CelebA and achieves good quantitative results on CIFAR10. 
\end{abstract}

\section{Introduction}
\label{sec:intro}
Deep generative models are a powerful tool to sample complex high dimensional objects from a low dimensional manifold. The dominant approaches for learning such generative models are variational autoencoders (VAEs) \citep{kingma2013auto, rezende2014stochastic} and generative adversarial networks (GANs) \citep{goodfellow2014generative}. VAEs allow not only to generate samples from the data distribution, but also to encode the objects into the latent space. However, VAE-like models require a careful likelihood choice. Misspecifying one may lead to undesirable effects in samples and reconstructions (e.g., blurry images). On the contrary, GANs do not rely on an explicit likelihood and utilize more complex loss function provided by a discriminator. As a result, they produce higher quality images. However, the original formulation of GANs \citep{goodfellow2014generative} lacks an important encoding property that allows many practical applications. For example, it is used in semi-supervised learning \citep{kingma2014semi}, in a manipulation of object properties using low dimensional manifold \citep{creswell2017conditional} and in an optimization utilizing the known structure of embeddings \citep{gomez2018automatic}.

VAE-GAN hybrids are of great interest due to their potential ability to learn latent representations like VAEs, while generating high-quality objects like GANs. In such generative models with a bidirectional mapping between the data space and the latent space one of the desired properties is to have good reconstructions ($x \approx G(E(x))$). In many hybrid approaches \citep{rosca2017variational, UlyanovVL18, CycleGAN2017, brock2016neural, tolstikhin2017wae} as well as in VAE-like methods it is achieved by minimizing $L_1$ or $L_2$ pixel-wise norm between $x$ and $G(E(x))$. However, the main drawback of using these standard reconstruction losses is that they enforce the generative model to recover too many unnecessary details of the source object $x$. For example, to reconstruct a bird picture we do not need an exact position of the bird on an image, but the pixel-wise loss  penalizes a lot for shifted reconstructions. 
Recently, \cite{li2017alice} improved ALI model \citep{dumoulin2016adversarially, donahue2016adversarial} by introducing a reconstruction loss in the form of a discriminator which classifies pairs $(x, x)$ and $(x, G(E(x)))$. 
However, in such approach, the discriminator tends to detect the fake pair $(x, G(E(x)))$ just by checking the identity of $x$ and $G(E(x))$ which leads to vanishing gradients. 

In this paper, we propose a novel autoencoding model which matches the distributions in the data space and in the latent space independently as in \citet{CycleGAN2017}. 
To ensure good reconstructions, we introduce an \emph{augmented} adversarial reconstruction loss as a discriminator which classifies pairs $(x, a(x))$ and $(x, G(E(x)))$ where $a(\cdot)$ is a stochastic augmentation function. 
This enforces the discriminator to take into account content invariant to the augmentation, thus making training more robust. 
We call this approach Pairwise Augmented Generative
Adversarial Networks (PAGANs). 
Measuring a reconstruction quality of autoencoding models is challenging. A standard reconstruction metric RMSE does not perform the content-based comparison. To deal with this problem we propose a novel metric \textit{Reconstruction Inception Dissimilarity} (RID) which is robust to content-preserving transformations (e.g., small shifts of an image). We show qualitative results on common datasets such as MNIST \citep{lecun-mnisthandwrittendigit-2010}, CIFAR10 \citep{krizhvsky2009cifar10} and CelebA \citep{liu2015faceattributes}.
PAGANs outperform existing VAE-GAN hybrids in Inception Score \citep{salimans2016improved} and 
Fr\'echet Inception Distance \citep{heusel2017frechet} except for the recently announced method PD-WGAN \citep{gemici2018primal} on CIFAR10 dataset. 

\begin{figure}[t]
    \centering
    \begin{tikzpicture}[remember picture,node distance=2cm,
                        box/.style={draw,rectangle,rounded corners}]
        \node[box,rectangle] (pr) {
            \begin{tikzpicture}[node distance=2.05]
                \node (x) {$\bm{x} \sim p^*(\bm{x})$};
                \node[below=of x]
                    (x_aug) {$\bm{y} \sim r(\bm{y} \mid \bm{x})$};
            \end{tikzpicture}
        };
        \node[box,minimum height=1cm,minimum width=2cm,right=of pr]
            (discriminator) {$D(\bm{x}, \bm{y})$};
        \node[box,right= of discriminator, minimum height=1mm] (pqp) {
            \begin{tikzpicture}[node distance=0.7cm]
                \node (x_rec) {$\hat{\bm{y}} \sim p_{\theta}(\bm{y} \mid \bm{z})$};
                \node[above=of x_rec] (z) {$\bm{z} \sim q_{\varphi}(\bm{z} \mid \bm{x})$};
                \node[above=of z] (x_) {$\bm{x} \sim p^*(\bm{x})$};
            \end{tikzpicture}
        };
        \draw[->] (x) -- (x_aug) node[midway,right]
			{$a(\bm{x})$};
        \draw[->] (z) -- (x_rec) node[midway,right] %($(z)!2cm!(x_rec)$)
			{$\scriptstyle G(\bm{z})$};
		\draw[->] (x_) -- (z) node[midway, right]
		    {$\scriptstyle E(\bm{x})$};
        \draw[->] (pr) -- (discriminator) node[midway,above]
			{$(\bm{x}, \bm{y})$};
        \draw[->] (pqp) -- (discriminator) node[midway,above]
			{$(\bm{x}, \hat{\bm{y}})$};
			
		\node[box,minimum height=1cm,minimum width=2cm,above=2.cm of discriminator]
            (discr_z) {$D(\bm{z})$};
        \node[box,left=of discr_z] (q_z) {     \begin{tikzpicture}[node distance=0.5cm]
                \node (z_q) {$\hat{\bm{z}} \sim q_{\varphi}(\bm{z} \mid \bm{x})$};
                \node[above=of z_q] (x_p) {$\bm{x} \sim p^*(x)$};
            \end{tikzpicture}
        };
        \node[box,,minimum height=1cm,minimum width=2.3cm,right=of discr_z] (p_z) {$\bm{z} \sim p(\bm{z})$};
        \draw[->] (x_p) -- (z_q) node[midway,right]
			{$\scriptstyle E(\bm{x})$};
        \draw[->] (q_z) -- (discr_z) node[midway, above]
			{$\hat{\bm{z}}$};
		\draw[->] (p_z) -- (discr_z) node[midway, above]
			{$\bm{z}$};
			
		\node[box,minimum height=1cm,minimum width=2cm,above=0.8cm of discr_z]
            (discr_x) {$D(\bm{x})$};
        \node[box,minimum height=1cm,minimum width=2.3cm,left=of discr_x] (p) {$\bm{x} \sim p^*(\bm{x})$};
        \node[box,right=of discr_x] (p_g) {
            \begin{tikzpicture}[node distance=0.5cm]
                \node (x_p_g) {$\hat{\bm{x}} \sim p_{\theta}(\bm{x} \mid \bm{z})$};
                \node[above=of x_p_g] (z_p) {$\bm{z} \sim p(z)$};
            \end{tikzpicture}
        };
        \draw[->] (z_p) -- (x_p_g) node[midway,right]
			{$\scriptstyle G(\bm{z})$};
        \draw[->] (p) -- (discr_x) node[midway, above]
			{$\bm{x}$};
		\draw[->] (p_g) -- (discr_x) node[midway, above]
			{$\hat{\bm{x}}$};
    \end{tikzpicture}
    \caption{\label{fig:model} The PAGAN model.}
\end{figure}
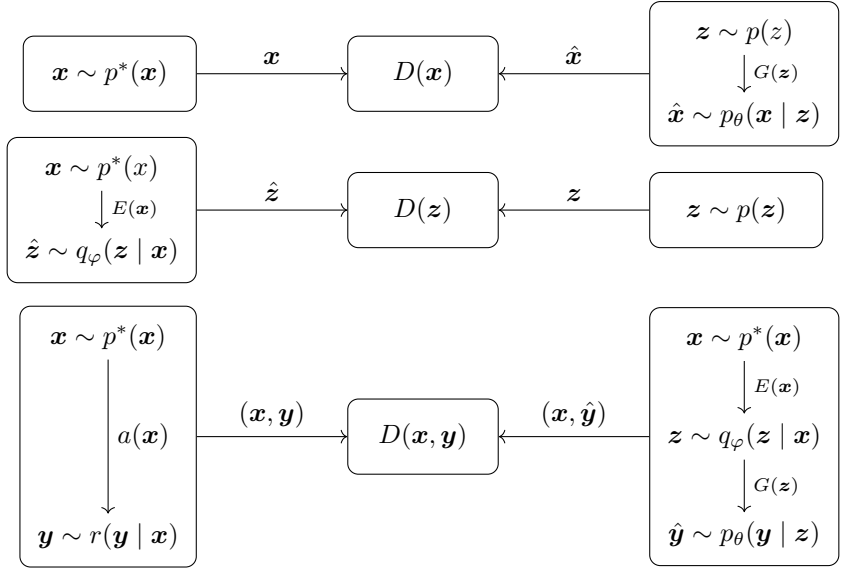

\section{Preliminaries}
Let us consider an adversarial learning framework where our goal is to match the true distribution $p^*(x)$ to the model distribution $p_{\theta}(x)$. As it was proposed in the original paper \citet{goodfellow2014generative}, the model distribution $p_{\theta}(x)$ is induced by the generator $G_{\theta}: z \xrightarrow{} x$ where $z$ is sampled from a prior $p(z)$. To match the distributions $p^*(x)$ and $p_{\theta}(x)$ in an adversarial manner, we introduce a discriminator $D_{\psi}: x \xrightarrow{} [0, 1]$. It takes an object $x$ and predicts the probability that this object is sampled from the true distribution $p^*(x)$. 
The training procedure of GANs \citep{goodfellow2014generative} is based on the minimax game of two players: the generator $G_{\theta}$ and the discriminator $D_{\psi}$. This game is defined as follows 
\begin{align}
    \min_{\theta}\max_{\psi} V(\theta, \psi) = \E_{p^*(x)}\log D_{\psi}(x) + \E_{p_z(z)}\log(1 - D_{\psi}(G_{\theta}(z)))
\end{align}
where $V(\theta, \psi)$ is a value function for this game. 

The optimal discriminator $D_{\psi^*}$ given fixed generator $G_{\theta}$ is 
\begin{align}
    D_{\psi^*}(x) = \dfrac{p^*(x)}{p^*(x) + p_{\theta}(x)}
\end{align}
and then the value function for the generator $V(\theta, \psi^*)$ given the optimal discriminator $D_{\psi^*}$ is equivalent to the Jensen-Shanon divergence between the model distribution $p_{\theta}(x)$ and the true distribution $p^*(x)$, i.e.
\begin{align}
    \theta^* = \arg\min_{\theta} V(\theta, \psi^*) = \arg\min_{\theta} JSD(p^*(x)\|p_{\theta}(x)). 
\end{align}
However, in practice, the gradient of the value function $V(\theta, \psi)$ with respect to the generator's parameters $\theta$ vanishes to zero. Therefore, \citet{goodfellow2014generative} proposed to train the generator $G_{\theta}$ by minimizing $-\log D_{\psi}(G_{\theta}(z))$ instead of $\log(1 - D_{\psi}(G_{\theta}(z)))$. This loss for the generator provides much more stable gradients and has the same fixed point as the minimax game of $D_{\psi}$ and $G_{\theta}$. 

\section{Pairwise Augmented Generative Adversarial Networks}
\label{sec:invgan}
In PAGANs model our aim is not only to learn how to generate real objects with the generator $G_{\theta}(z)$ where $z$ is sampled from prior $p(z)$ but at the same time learn an inverse mapping (encoder) $E_{\varphi}: x \xrightarrow{} z$. Additionally, we use the third stochastic transformation $a: x \xrightarrow{} y$ without parameters which is called augmenter. It produces the augmentation $y$ of the source object $x$. 

Let us consider the distributions which are induced by these three mappings
\begin{itemize}
    \item $p_{\theta}(x|z)$ - the conditional distribution of outputs of the generator $G_{\theta}(z)$ given $z$; 
    \item $q_{\varphi}(z|x)$ - the conditional distribution of outputs of the encoder $E_{\varphi}(x)$ given $x$;
    \item $r(y|x)$ - the conditional distribution over the augmentations $a(x)$ given a source object $x$. 
\end{itemize}

Within the PAGANs model our goal is to find such optimal parameters $\theta^*$ and $\varphi*$ that ensure
\begin{enumerate}
    \item \emph{generator matching}: $p_{\theta^*}(x) = p^*(x)$ where $p_{\theta^*}(x) = \int p_{\theta^*}(x|z)p(z)dz$, i.e. the generator $G_{\theta^*}$ samples objects from the true distribution $p^*(x)$; 
    \item \emph{encoder matching}: $q_{\varphi^*}(z) = p(z)$ where $q_{\varphi^*}(z) = \int q_{\varphi^*}(z|x)p^*(x)dx$, i.e. the encoder $E_{\varphi}$ generates embeddings $z$ as the prior $p(z)$;
    \item \emph{reconstruction matching}: $p_{\theta^*, \varphi^*}(y|x) = r(y|x)$ where 
    \begin{align}
        p_{\theta^*, \varphi^*}(y|x) = \int p_{\theta^*}(y|z)q_{\varphi^*}(z|x)dz,
    \end{align}
    i.e. reconstructions $G_{\theta^*}(E_{\varphi^*}(x))$ are distributed as augmentations $r(y|x)$ of the source object $x$. 
\end{enumerate}

\subsection{Generator \& Encoder Matching}
\label{sec:gen.enc.matching}
In order to deal with generator and encoder matching problems we can use the framework of the vanilla GANs \citep{goodfellow2014generative}. We introduce two discriminators $D_{\psi_x}$ and $D_{\psi_z}$ for two minimax games:
\begin{itemize}
    \item generator matching:
    \begin{align}
        \min_{\theta}\max_{\psi_x} V_x(\theta, \psi_x) = \E_{p^*(x)}\log D_{\psi_x}(x) + \E_{p_{\theta}(x)}\log(1 - D_{\psi_x}(x))
    \end{align}
    \item encoder matching:
    \begin{align}
        \min_{\varphi}\max_{\psi_z} V_z(\varphi, \psi_z) = \E_{p(z)}\log D_{\psi_z}(z) + \E_{q_{\varphi}(z)}\log(1 - D_{\psi_z}(z))
    \end{align}
\end{itemize}
Then the value functions $V_x$ and $V_z$ given the optimal discriminators $D_{\psi^*_x}$ and $D_{\psi^*_z}$ are equivalent to Jensen-Shanon divergence:
\begin{align}
    & \theta^* = \arg\min_{\theta}V_x(\theta, \psi^*_x) = \arg\min_{\theta} JSD(p^*(x)\|p_{\theta}(x)) \\
    & \varphi^* = \arg\min_{\varphi}V_z(\varphi, \psi^*_z) = \arg\min_{\varphi} JSD(p(z)\|q_{\varphi}(z))
\end{align}

\subsection{Reconstruction Matching: Augmented Adversarial Reconstruction Loss}
\label{sec:rec.loss}
The solution of the reconstruction matching problem ensures that reconstructions $G_{\theta}(E_{\varphi}(x))$ correspond to the source object $x$ up to defined random augmentations $a(x)$. In PAGANs model we introduce the minimax game for training the adversarial distance between the reconstructions and augmentations of the source object $x$. We consider the discriminator $D_{\psi}$ which takes a pair $(x, y)$ and classifies it into one of the following classes:
\begin{itemize}
    \item the \emph{real} class: pairs $(x, y)$ from the distribution $p^*(x)r(y|x)$, i.e. the object $x$ is taken from the true distribution $p^*(x)$ and the second $y$ is obtained from the $x$ by the random augmentation $a(x)$; 
    \item the \emph{fake} class: pairs $(x, y)$ from the distribution
    \begin{align}
        p^*(x)p_{\theta, \varphi}(y|x) = p^*(x)\int p_{\theta}(y|z)q_{\varphi}(z|x)dz, 
    \end{align}
    i.e. $x$ is sampled from $p^*(x)$ then $z$ is generated from the conditional distribution $q_{\varphi}(z|x)$ by the encoder $E_{\varphi}(x)$ and $y$ is produced by the generator $G_{\varphi}(z)$ from the conditional model distribution $p_{\theta}(y|z)$. 
\end{itemize}
Then the minimax problem is
\begin{align}
    \min_{\theta, \varphi} \max_{\psi} V(\theta, \varphi, \psi)
\end{align}
where 
\begin{align}\label{val_fn}
    V(\theta, \varphi, \psi) = \E_{p^*(x)r(y|x)}\log D_{\psi}(x, y) + \E_{p^*(x)p_{\theta, \varphi}(y|x)}\log(1 - D_{\psi}(x, y))
\end{align}
Let us prove that such minimax game will match the distributions $r(y|x)$ and $p_{\theta, \varphi}(y|x)$. At first, we find the optimal discriminator:
\begin{restatable}{proposition}{optdiscrim}
\label{eq:optdiscrim}
Given a fixed generator $G_{\theta}$ and a fixed encoder $E_{\varphi}$, the optimal discriminator $D_{\psi^*}$ is
\begin{align}
    D_{\psi^*}(x, y) = \dfrac{r(y|x)}{r(y|x) + p_{\theta, \varphi}(y|x)}
\end{align}
\end{restatable}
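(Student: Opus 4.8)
The plan is to mirror the classical optimal-discriminator argument of \citet{goodfellow2014generative}, adapted to the paired conditional setting. First I would rewrite the value function \eqref{val_fn} as a single double integral over the product space of objects $x$ and second coordinates $y$. Since both expectations in $V(\theta, \varphi, \psi)$ are taken against densities that share the common marginal factor $p^*(x)$, I would collect them into
\begin{align}
    V(\theta, \varphi, \psi) = \int\int p^*(x)\Big[ r(y|x)\log D_{\psi}(x, y) + p_{\theta, \varphi}(y|x)\log\big(1 - D_{\psi}(x, y)\big)\Big]\,dy\,dx.
\end{align}
The key structural observation is that $\theta$ and $\varphi$ are held fixed, so the two conditional densities $r(y|x)$ and $p_{\theta, \varphi}(y|x)$ are fixed functions, and the only freedom left is the choice of $D_{\psi}$ at each point $(x,y)$.

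Next I would invoke the standard nonparametric capacity assumption: we treat the discriminator as an arbitrary measurable function $D \colon (x,y) \mapsto [0,1]$, so maximizing $V$ over $\psi$ reduces to maximizing the integrand pointwise for each $(x,y)$. Because $p^*(x) \ge 0$ is a nonnegative multiplicative factor that does not depend on the value $D(x,y)$, it does not affect the location of the maximizer (and wherever $p^*(x) = 0$ the choice of $D$ is immaterial). Thus at each fixed $(x,y)$ I must maximize a scalar expression of the form $a\log t + b\log(1-t)$ over $t \in (0,1)$, with $a = r(y|x)$ and $b = p_{\theta, \varphi}(y|x)$.

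I would then record the elementary calculus fact that, for constants $a, b \ge 0$ not both zero, the map $t \mapsto a\log t + b\log(1-t)$ attains its unique maximum on $(0,1)$ at $t = a/(a+b)$; this follows by differentiating, setting $a/t - b/(1-t) = 0$, and checking concavity. Substituting $a = r(y|x)$ and $b = p_{\theta, \varphi}(y|x)$ yields the claimed optimum
\begin{align}
    D_{\psi^*}(x, y) = \dfrac{r(y|x)}{r(y|x) + p_{\theta, \varphi}(y|x)},
\end{align}
and the common factor $p^*(x)$ has already dropped out, so no further simplification is needed. The only genuine subtlety — rather than an obstacle — is the pointwise-optimization step: it is rigorous only under the idealized assumption that the discriminator family is expressive enough to realize this pointwise maximizer, the same caveat that underlies the original GAN analysis. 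I would state this assumption explicitly and then the result follows immediately.
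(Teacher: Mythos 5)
Your proposal is correct and follows essentially the same route as the paper: the paper's proof sets $t=(x,y)$, $p_1(t)=p^*(x)r(y|x)$, $p_2(t)=p^*(x)p_{\theta,\varphi}(y|x)$ and invokes the optimal-discriminator result of \citet{goodfellow2014generative} as a black box, after which the common factor $p^*(x)$ cancels exactly as in your argument. The only difference is that you inline that cited result by carrying out the pointwise maximization of $a\log t + b\log(1-t)$ explicitly (and state the nonparametric capacity caveat), which makes your version slightly more self-contained but not substantively different.
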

\begin{proof}
Given in Appendix~\ref{sec:optdiscrim}. 
\end{proof}

Then we can prove that given an optimal discriminator the value function $V(\theta, \varphi, \psi)$ is equivalent to the expected Jensen-Shanon divergence between the distributions $r(y|x)$ and $p_{\theta, \varphi}(y|x)$. 
\begin{restatable}{proposition}{reconstloss}
\label{eq:reconstloss}
The minimization of the value function $V$ under an optimal discriminator $D_{\psi^*}$ is equivalent to the minimization of the expected Jensen-Shanon divergence between $r(y|x)$ and $p_{\theta, \varphi}(y|x)$, i.e.
\begin{align}
    \theta^*, \varphi^* = \arg\min_{\theta, \varphi}V(\theta, \varphi, \psi_*) = \arg\min_{\theta, \varphi} \E_{p^*(x)} JSD(r(y|x)\|p_{\theta, \varphi}(y|x))
\end{align}
\end{restatable}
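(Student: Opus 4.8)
The plan is to substitute the optimal discriminator from Proposition~\ref{eq:optdiscrim} into the value function~\eqref{val_fn} and then recognize the resulting quantity as a constant plus a scaled expected Jensen--Shannon divergence, mirroring the pointwise argument of \citet{goodfellow2014generative} but carried out conditionally on $x$.

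First I would plug $D_{\psi^*}(x,y)=r(y|x)/\big(r(y|x)+p_{\theta,\varphi}(y|x)\big)$ into $V$, giving
\[
V(\theta,\varphi,\psi^*)=\E_{p^*(x)r(y|x)}\log\frac{r(y|x)}{r(y|x)+p_{\theta,\varphi}(y|x)}+\E_{p^*(x)p_{\theta,\varphi}(y|x)}\log\frac{p_{\theta,\varphi}(y|x)}{r(y|x)+p_{\theta,\varphi}(y|x)}.
\]
Because both joint densities carry the common factor $p^*(x)$, I would then rewrite each term as an outer expectation over $p^*(x)$ of an inner conditional expectation over $y$, so that $V(\theta,\varphi,\psi^*)=\E_{p^*(x)}[\,B(x)\,]$, where the bracket $B(x)$ depends only on the two conditional densities $r(\cdot|x)$ and $p_{\theta,\varphi}(\cdot|x)$ at a fixed $x$.

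Next, for each fixed $x$ the bracket $B(x)$ is the familiar two-distribution expression. I would introduce $\log 2$ by replacing the denominator $r+p$ with $2\cdot\frac{r+p}{2}$, which rewrites $B(x)$ as $-\log 4$ plus two KL terms measured against the mixture $\frac12\big(r(\cdot|x)+p_{\theta,\varphi}(\cdot|x)\big)$, i.e. $B(x)=-\log 4+2\,JSD\big(r(y|x)\|p_{\theta,\varphi}(y|x)\big)$. Taking the outer expectation yields
\[
V(\theta,\varphi,\psi^*)=-\log 4+2\,\E_{p^*(x)}\,JSD\big(r(y|x)\|p_{\theta,\varphi}(y|x)\big).
\]
Since $-\log4$ and the factor $2$ do not depend on $\theta,\varphi$, the minimizer of $V$ coincides with the minimizer of the expected JSD, which is exactly the claimed identity.

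The main obstacle is really a bookkeeping subtlety rather than a deep difficulty: one must justify pulling $p^*(x)$ out as a common factor and treating the inner problem pointwise in $x$, and then argue that the $x$-averaged JSD is minimized precisely when $r(y|x)=p_{\theta,\varphi}(y|x)$ for $p^*$-almost every $x$. This step relies on the nonnegativity of the JSD (so the expectation is minimized term by term) together with the idealized-capacity assumption implicit in the vanilla GAN analysis, namely that the generator/encoder family is expressive enough to realize this conditional match. I would close by noting that the expected JSD vanishes iff the two conditionals agree almost everywhere, so the global optimum attains exactly the reconstruction-matching property stated in Section~\ref{sec:invgan}.
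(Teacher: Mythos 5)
Your proposal is correct and follows essentially the same route as the paper's own proof: substitute the optimal discriminator from Proposition~\ref{eq:optdiscrim} into the value function, factor out the common $p^*(x)$ to reduce to a conditional two-distribution expression, and identify the inner bracket as $-\log 4 + 2\,JSD\bigl(r(y|x)\,\|\,p_{\theta,\varphi}(y|x)\bigr)$ before taking the outer expectation. Your explicit mixture-rewriting step and the closing remark on almost-everywhere matching merely spell out details the paper delegates to \citet{goodfellow2014generative}.
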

\begin{proof}
Given in Appendix~\ref{sec:reconstloss}. 
\end{proof}

If $r(y|x) = \delta_x(y)$ then the optimal discriminator $D_{\psi^*}(x, y)$ will learn an indicator $I\{x=y\}$ as was proved in \citet{li2017alice}. As a consequence, the objectives of the generator and the encoder are very unstable and have vanishing gradients in practice. On the contrary, if the distribution $r(y|x)$ is non-degenerate as in our model then the value function $V(\theta, \varphi, \psi)$ will be  well-behaved and much more stable which we observed in practice. 

\subsection{Training Objectives}

We obtain that for the generator and the encoder we should optimize the sum of two value functions:
\begin{itemize}
    \item the generator's objective:
    \begin{gather}
        \label{eq:gen.obj}
        \arg\min_{\theta} \left[V_x(\theta, \psi_x) + V(\theta, \varphi, \psi)\right] = \\ 
        = \arg\min_{\theta} \left[\E_{p_{\theta}(x)}\log(1 - D_{\psi_x}(x)) + \E_{p^*(x)p_{\theta, \varphi}(y|x)}\log(1 - D_{\psi}(x, y))\right]
    \end{gather}
    \item the encoder's objective:
    \begin{gather}
        \label{eq:enc.obj}
        \arg\min_{\varphi} \left[V_z(\varphi, \psi_z) + V(\theta, \varphi, \psi)\right] = \\ 
        = \arg\min_{\varphi} \left[\E_{q_{\varphi}(z)}\log(1 - D_{\psi_z}(z)) + \E_{p^*(x)p_{\theta, \varphi}(y|x)}\log(1 - D_{\psi}(x, y))\right]
    \end{gather}
\end{itemize}
In practice in order to speed up the training we follow \citet{goodfellow2014generative} and use more stable objectives replacing $\log(1 - D(\cdot))$ with $-\log(D(\cdot))$. See \autoref{fig:model} for the description of our model and Algorithm~\ref{alg:pagan} for an algorithmic illustration of the training procedure. 

We can straightforwardly extend the definition of PAGANs model to $f$-PAGANs which minimize the $f$-divergence and to WPAGANs which optimize the Wasserstein-1 distance. More detailed analysis of these models is placed in Appendix~\ref{sec:extending}. 

\begin{algorithm}[t]
\begin{algorithmic}
    \State $\theta, \varphi, \psi_{x}, \psi_{z}, \psi_{xx} \gets \text{initialize network parameters}$
    \Repeat
		\State $\bm{x}^{(1)}, \ldots, \bm{x}^{(N)} \sim p^*(\bm{x})$
            \Comment{Draw $N$ samples from the dataset and the prior}
		\State $\bm{z}^{(1)}, \ldots, \bm{z}^{(N)} \sim p(\bm{z})$
		\State $\hat{\bm{z}}^{(i)} \sim q_{\varphi}(\bm{z} \mid \bm{x} = \bm{x}^{(i)}),
			    \quad i = 1, \ldots, N$
            \Comment{Sample from the conditionals}
		\State $\bm{x}_{pr}^{(j)} \sim p_{\theta}(\bm{x} \mid \bm{z} = \bm{z}^{(j)}),
				\quad j = 1, \ldots, N$
		\State $\bm{x}_{rec}^{(i)} \sim p_{\theta}(\bm{x} \mid \bm{z} = \hat{\bm{z}}^{(i)}),
				\quad j = 1, \ldots, N$
		\State $\bm{x}_{aug}^{(i)} \sim r(\bm{y} \mid \bm{x} = \bm{x}^{(i)}),
				\quad j = 1, \ldots, N$
        \State $\mathcal{L}_d^x \gets
            -\frac{1}{N} \sum_{i=1}^N \log D(\bm{x}^{(i)})
            -\frac{1}{N} \sum_{j=1}^N\ log\left(1 - D(\bm{x}_{pr}^{(j)})\right)$
            \Comment{Compute discriminator loss}
        \State $\mathcal{L}_d^z \gets
            -\frac{1}{N} \sum_{i=1}^N \log D(\bm{z}^{(i)})
            -\frac{1}{N} \sum_{j=1}^N\ log\left(1 - D(\hat{\bm{z}}^{(j)})\right)$
        \State $\mathcal{L}_d^{xx} \gets
            -\frac{1}{N} \sum_{i=1}^N \log D(\bm{x}^{(i)}, \bm{x}^{(i)}_{aug})
            -\frac{1}{N} \sum_{j=1}^N\ log\left(1 - D(\bm{x}^{(j)}, \bm{x}^{(j)}_{rec})\right)$
        \State $\mathcal{L}_g \gets
            -\frac{1}{N} \sum_{i=1}^N \log D(\bm{x}^{(i)}_{pr})
            -\frac{1}{N} \sum_{j=1}^N \log D(\bm{x}^{(j)}, \bm{x}^{(j)}_{rec})$
            \Comment{Compute generator loss}
        \State $\mathcal{L}_e \gets
            -\frac{1}{N} \sum_{i=1}^N \log D(\hat{\bm{z}}^{(i)})
            -\frac{1}{N} \sum_{j=1}^N \log D(\bm{x}^{(j)}, \bm{x}^{(j)}_{rec})$
            \Comment{Compute encoder loss}
        \State $\psi_x \gets \psi_x - \nabla_{\psi_x} \mathcal{L}_d^x, \; \psi_z \gets \psi_z - \nabla_{\psi_z} \mathcal{L}_d^z$
            \Comment{Gradient update on discriminator networks}
        \State $\psi_{xx} \gets \psi_{xx} - \nabla_{\psi_{xx}} \mathcal{L}_d^{xx}$
        \State $\theta \gets \theta - \nabla_{\theta} \mathcal{L}_g, \; \varphi \gets \varphi - \nabla_{\varphi} \mathcal{L}_e$
            \Comment{Gradient update on generator-encoder networks}
    \Until{convergence}
\end{algorithmic}
\caption{\label{alg:pagan} The PAGAN training algorithm.}
\end{algorithm}

\begin{comment}

\begin{itemize}
    \item the generator's objective:
    \begin{align}

        \arg\min_{\theta} -\left[\E_{p_{\theta}(x)}\log D_{\psi^{(1)}}(x) + \E_{p^*(x)p_{\theta, \varphi}(y|x)}\log D_{\psi}(x, y)\right]
    \end{align}
    \item the encoder's objective:
    \begin{align}

        \arg\min_{\varphi} -\left[\E_{q_{\varphi}(z)}\log D_{\psi^{(2)}}(z) + \E_{p^*(x)p_{\theta, \varphi}(y|x)}\log D_{\psi}(x, y)\right]
    \end{align}
\end{itemize}
\end{comment}

\section{Related Work}
Recent papers on VAE-GAN hybrids explore different ways to build a generative model with an encoder part. One direction is to apply adversarial training in the VAE framework to match the variational posterior distribution $q(z|x)$ and the prior distribution $p(z)$ \citep{mescheder17a} or to match the marginal $q(z)$ and $p(z)$ \citep{makhzani2015adversarial, tolstikhin2017wae}. Another way within the VAE model is to introduce the discriminator as a part of a data likelihood \citep{larsen2015autoencoding, brock2016neural}. 
Within the GANs framework, a common technique is to regularize the model with the reconstruction loss term \citep{che2016mode, rosca2017variational, UlyanovVL18}. 

Another principal approach is to train the generator and the encoder \citep{donahue2016adversarial, dumoulin2016adversarially, li2017alice} simultaneously in a fully adversarial way. These methods match the joint distributions $p^*(x)q(z|x)$ and $p_{\theta}(x|z)p(z)$ by training the discriminator which classifies the pairs $(x, z)$. ALICE model \citep{li2017alice} introduces an additional entropy loss for dealing with the non-identifiability issues in ALI model. \citet{li2017alice} approximated the entropy loss with the cycle-consistency term which is equivalent to the adversarial reconstruction loss. The model of \citet{pu2017adversarial} puts ALI to the VAE framework where the same joint distributions are matched in an adversarial manner. As an alternative, \citet{UlyanovVL18} train generator and encoder by optimizing the minimax game without the discriminator. Optimal transport approach is also explored, \citet{gemici2018primal} introduce an algorithm based on primal and dual formulations of an optimal transport problem.

In PAGANs model the marginal distributions in the data space $p^*(x)$ and $p_{\theta}(x)$ and in the latent space $p(z)$ and $q(z)$ are matched independently as in \citet{CycleGAN2017}. Additionally, the augmented adversarial reconstruction loss is minimized by fooling the discriminator which classifies the pairs $(x, a(x))$ and $(x, G(E(x)))$. 

\section{Experiments}
In this section, we validate our model experimentally.
At first, we compare PAGAN with other similar methods that allow performing both inference and generation using Inception Score and Fr\'echet Inception Distance.
Secondly, to measure reconstruction quality, we introduce \textit{Reconstruction Inception Dissimilarity} (RID) and prove its usability. 
In the last two experiments we show the importance of the adversarial loss and augmentations.  

For the architecture choice we used deterministic DCGAN%
\footnote{DCGAN architecture is a common choice for GANs, other works use similar architecture} generator and discriminator networks provided by \texttt{pfnet-research}%
\footnote{https://github.com/pfnet-research/chainer-gan-lib}%
, the encoder network has the same architecture as the discriminator except for the output dimension. The encoder's output is a factorized normal distribution. Thus $p_{\theta}(x|z) = \delta_{G_{\theta}(z)}(x)$, $\;q_{\varphi}(z|x) = \mathcal{N}(\mu_{\varphi}(x), \sigma_{\varphi}^2(x)I)$ where $\mu_{\varphi}, \sigma_{\varphi}$ are outputs of the encoder network. The discriminator $D(z)$ architecture is chosen to be a 2 layer MLP with 512, 256 hidden units.
We also used the same default hyperparameters as provided in the repository and applied a spectral normalization following \cite{miyato2018spectral}. For the augmentation $a(x)$ defined in Section \ref{sec:invgan} we used a combination of reflecting 10\% pad and the random crop to the same image size. The prior distribution $p(z)$ is chosen to be a standard distribution $\mathcal{N}(0, I)$.
To evaluate Inception Score and 
Fr\'echet Inception Distance we used the official implementation provided in \texttt{tensorflow 1.10.1} \citep{tensorflow2015-whitepaper}.

To optimize objectives (\ref{eq:enc.obj}), (\ref{eq:gen.obj}), we need to have a discriminator working on pairs $(x, y)$. 
This can be done using special network architectures like siam networks \citep{bromley1993siam} or via an image concatenation. 
The latter approach can be implemented in two concurrent ways: concatenating channel or widthwise. 
Empirically we found that the siam architecture does not lead to significant improvement and concatenating width wise to be the most stable. We use this configuration in all the experiments.

\textbf{Sampling Quality} \\
To see whether our method provides good quality samples from the prior, we compared our model to related works that allow an inverse mapping. We performed our evaluations on CIFAR10 dataset since quantitative metrics are available there. 
Considering Fr\'echet Inception Distance (FID), our model outperforms all other methods. Inception Score shows that PAGANs  significantly better than others except for recently announced  PD-WGAN.
Quantitative results are given in Table \ref{table:inception.scores}. Plots with samples and reconstructions for CIFAR10 dataset are provided in Figure \ref{img:cifar10eval}. Additional visual results for more datasets can be found in Appendix \ref{sec:app:images}.
\begin{table}[h!]
\centering
\caption{Inception Score and Fr\'echet Inception Distance for different methods. IS and FID for other methods were taken from literature (if possible). For AGE we got FID using a pretrained model.}
	\label{table:inception.scores}
  \begin{tabular}{ lcc }
    Model & FID & Inception Score \delim
    WAE-GAN \citep{tolstikhin2017wae} & 87.7 & 4.18 $\pm$ 0.04 \delim
    ALI \citep{dumoulin2016adversarially} & & 5.34 $\pm$ 0.04 \delim
    AGE \citep{UlyanovVL18} & 39.51 & 5.9 $\pm$ 0.04 \delim
    ALICE \citep{li2017alice} & & 6.02 $\pm$ 0.03 \delim
    $\alpha$-GANs \citep{rosca2017variational} & & 6.2 \delim
    AS-VAE \citep{yuchen2017asvae} & & 6.3 \delim
    PD-WGAN, $\lambda_{mix}=0$ \citep{gemici2018primal} & 33.0 & \textbf{6.70 $\pm$ 0.09} \delim
    PAGAN (ours) & \textbf{32.84} & 6.56 $\pm$ 0.06 \delim
  \end{tabular}
	
\end{table}

\begin{figure}[h!]
    \centering
    \begin{subfigure}{0.247\linewidth}
    \includegraphics[width=\linewidth]{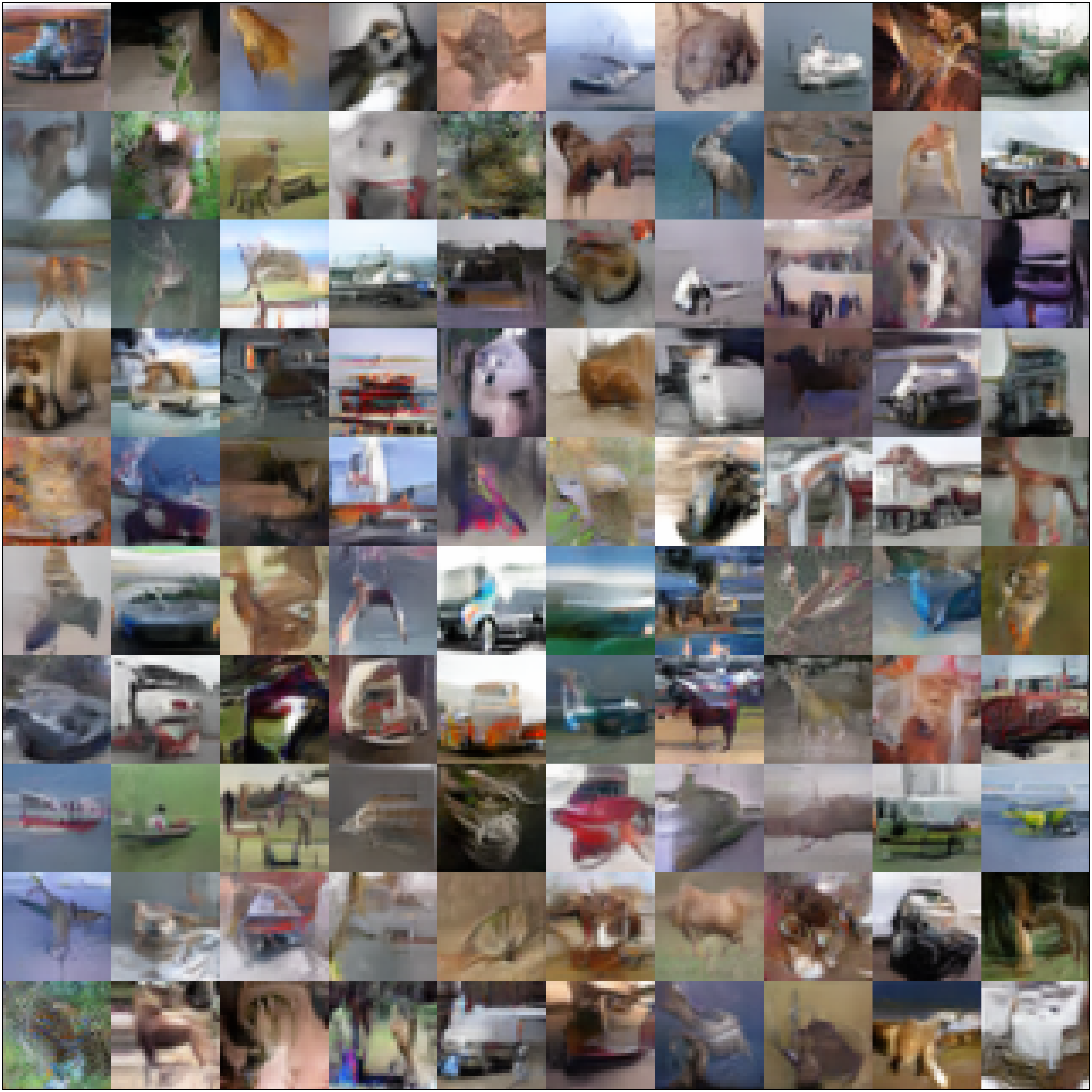}
    \caption{PAGAN samples}
    \end{subfigure}%
    ~
    \begin{subfigure}{0.25\linewidth}
    \includegraphics[width=\linewidth]{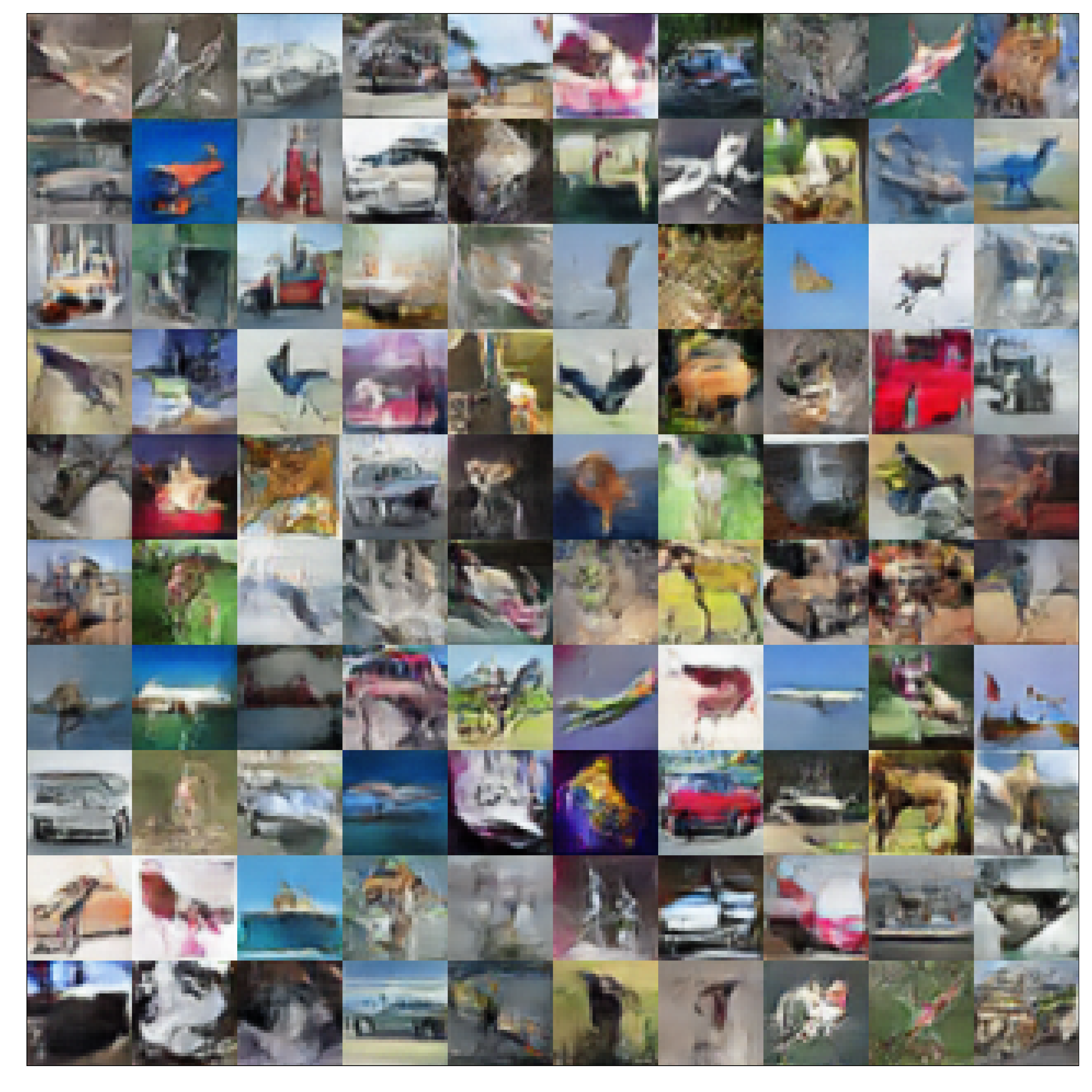}
    \caption{AGE samples}
    \end{subfigure}%
    ~
    \begin{subfigure}{0.247\linewidth}
    \includegraphics[width=\linewidth]{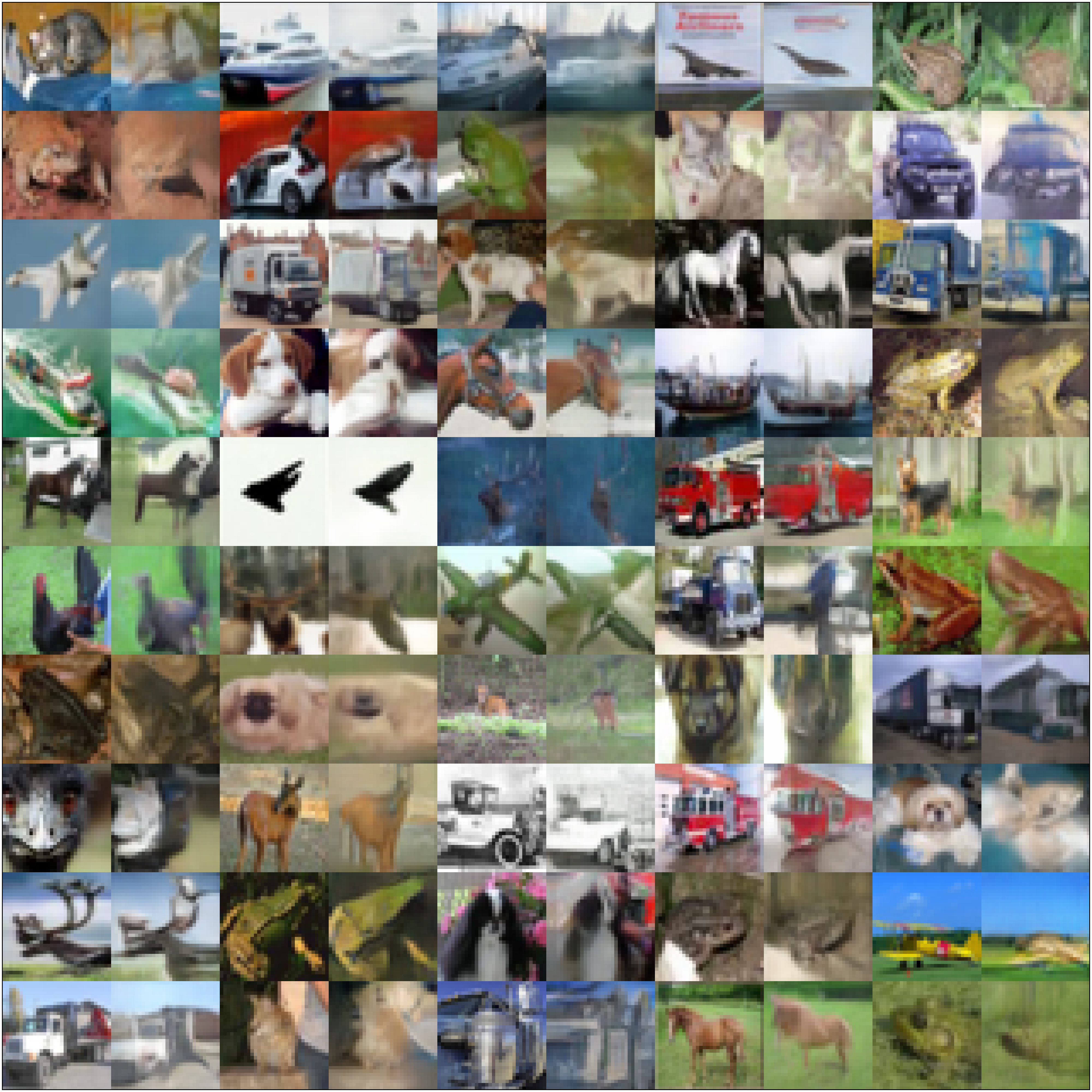}
    \caption{PAGAN reconstructions}
    \end{subfigure}%
     ~
    \begin{subfigure}{0.25\linewidth}
    \includegraphics[width=\linewidth]{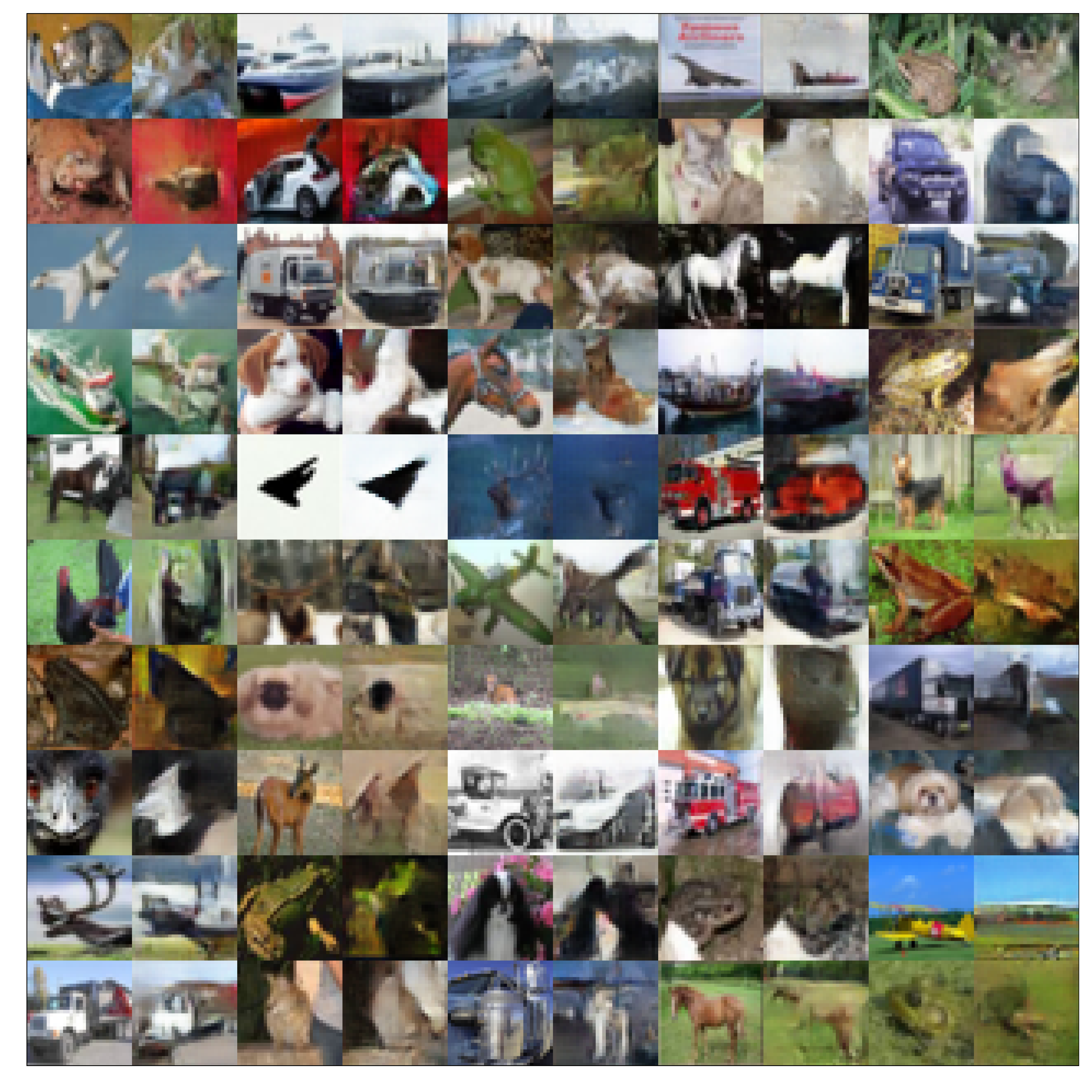}
    \caption{AGE reconstructions}
    \end{subfigure}%
    \caption{Evaluation of Generator and Encoder on CIFAR10 dataset, on plots (c), (d) odd columns denote original images, even stand for corresponding reconstructions on test partition.}
    \label{img:cifar10eval}
\end{figure}

\textbf{Reconstruction Inception Dissimilarity}\\
The traditional approach to estimate the reconstruction quality is to compute RMSE distance from source images to reconstructed ones. However, this metric suffers from focusing on exact reconstruction and is not content aware. 
RMSE penalizes content-preserving transformations while allows such undesirable effect as blurriness which degrades visual quality significantly. 
We propose a novel metric \textit{Reconstruction Inception Dissimilarity} (RID) which is based on a pre-trained classification network and is defined as follows:
\begin{align}
    RID = \text{exp} \left\{ \mathbb{E}_{x \sim \mathcal{D}} \KL(p(y|x)\|p(y|G(E(x)))) \right\},
    \label{eq:ris}
\end{align}
where $p(y|x)$ is a pre-trained classifier that estimates the label distribution given an image. Similar to \cite{salimans2016improved} we use a pre-trained Inception Network \citep{szegedy2016inception} to calculate softmax outputs.

\begin{wraptable}{r}{5.5cm}
    \centering
    \caption{Evaluation of RMSE an RID metrics on CIFAR10 dataset.}
    \label{tab:ris.compare}
    \begin{tabular}{lcc}
         Model & RMSE & RID \delim
         AUG & 8.89 & 1.57 $\pm$ 0.02 \delim
         VAE & 5.85 & 44.33 $\pm$ 2.27 \delim
         AGE & 6.675 & 19.02 $\pm$ 0.84 \delim
         PAGANs  & 8.12 & \textbf{13.01 $\pm$ 0.82} \delim
    \end{tabular}
\end{wraptable}

Low RID indicates that the content did not change after reconstruction. To calculate standard deviations, we use the same approach as for IS and split test set on 10 equal parts\footnote{Split is done sequentially without shuffling}. Moreover RID is robust to augmentations that do not change the visual content and in this sense is much better than RMSE. To compare new metric with RMSE, we train a vanilla VAE with resnet-like architecture on CIFAR10. 
We compute RID for its reconstructions and real images with the augmentation (mirror 10\% pad + random crop). In Table \ref{tab:ris.compare} we show that RMSE for VAE is better in comparison to augmented images (AUG), but we are not satisfied with its reconstructions (see Figure \ref{fig:vae} in Appendix \ref{sec:app:images.vae}), Figure \ref{fig:rid} provides even more convincing results. 
RID allows a fair comparison, for VAE it is dramatically higher (44.33) than for AUG (1.57). Value 1.57 for AUG says that KL divergence is close to zero and thus content is almost not changed.
We also provide estimated RID and RMSE for AGE that was publicly available\footnote{Pretrained AGE: https://github.com/DmitryUlyanov/AGE}. From \autoref{tab:ris.compare} we see that PAGANs outperform AGE which reflects that our model has better reconstruction quality. 
\begin{figure}[t]
    \centering
    \includegraphics[width=\linewidth]{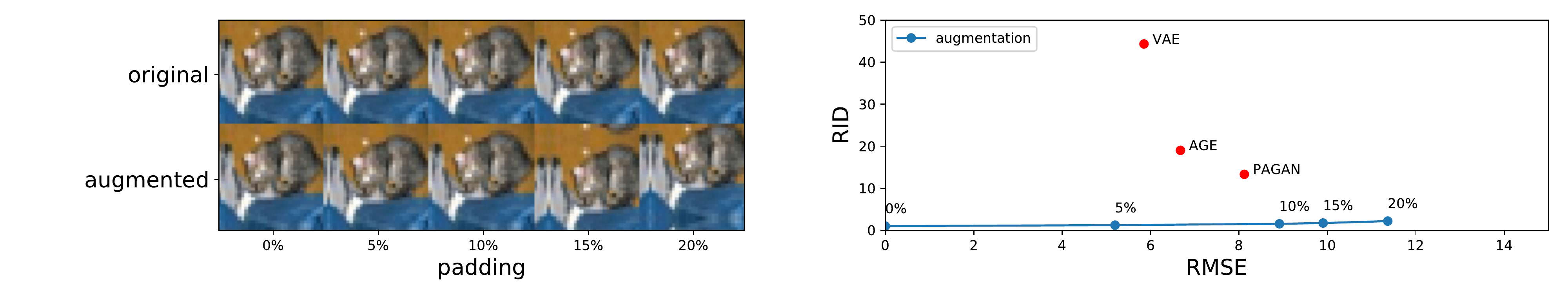}
    \caption{Reconstruction Inception Dissimilarity compared to RMSE. Unlike RMSE, RID captures distortions in image content much more better. Having same RMSE, augmentation has much more lower RID compared to a set of other methods.}
    \label{fig:rid}
\end{figure}

\textbf{Importance of adversarial loss}\\
To prove the importance of an adversarial loss, we experiment replacing adversarial loss with the standard $L_1$ pixel-wise distance between source images and corresponding reconstructions and compared FID, IS and RID metrics.
Using an augmentation in this setting is ambiguous. Thus we did not use any augmentation in training of the changed model. Quantitative results for the experiment are provided in Table~\ref{tab:l1.loss.compare}. IS and FID results suggest that our model without adversarial loss performed worse in generation. Reconstruction quality significantly dropped considering RID. Visual results in Appendix~\ref{sec:app:images.l1} confirm our quantitative findings.

\begin{table}[t!]
    \centering
    \caption{Reconstruction Inception Dissimilarity, Inception Score and Fr\'echet Inception Distance calculated for three setups: 1) the proposed model, PAGAN; 2) PAGAN with $L_1$ for reconstruction loss; 3) PAGAN with augmentation removed. Model without adversarial loss or without augmentation performed worse in both generation and reconstruction tasks.}
    \label{tab:l1.loss.compare}
    \begin{tabular}{lccc}
    Model & FID & IS & RID \delim
    PAGAN   & 32.84 & 6.56 $\pm$ 0.06 &  13.01 $\pm$ 0.82\delim
    PAGAN-L1 & 76.73 & 4.46 $\pm$ 0.03 & 30.94 $\pm$ 1.58 \delim 
    PAGAN-NOAUG  & 111.151 & 4.23 $\pm$ 0.06 & 50.15 $\pm$ 2.71 \delim 
    \end{tabular}
    
\end{table}
\FloatBarrier

\textbf{Importance of augmentation}\\
In ALICE model \citep{li2017alice} an adversarial reconstruction loss was implemented without an augmentation.
As we discussed in Section~\ref{sec:intro} its absence leads to undesirable effects. 
Here we run an experiment to show that our model without augmentation performs worse. 
Quantitative results provided in Table~\ref{tab:l1.loss.compare} illustrate that our model without an augmentation fails to recover both good reconstruction and generation properties. Visual comparisons can be found in Appendix \ref{sec:app:images.noaug}. 
Using the results obtained from the last two experiments we conclude that adversarial reconstruction loss works significantly better with augmentation. 

\section{Conclusions}
In this paper, we proposed a novel framework with an augmented adversarial reconstruction loss. 
We introduced RID to estimate reconstructions quality for images.
It was empirically shown that this metric could perform content-based comparison of reconstructed images. 
Using RID, we proved the value of augmentation in our experiments.
We showed that the augmented adversarial loss in this framework plays a key role in getting not only good reconstructions but good generated images.

Some open questions are still left for future work. More complex architectures may be used to achieve better IS and RID. The random shift augmentation may not the only possible choice, and other choices remained undiscovered.

%\subsubsection*{Acknowledgments}

\bibliography{iclr2019_conference}
\bibliographystyle{iclr2019_conference}

\newpage
\begin{appendices}
\section{Proofs}

\subsection{Proof of Proposition~\ref{eq:optdiscrim} (optimal discriminator)}
\lblsec{optdiscrim}
\optdiscrim*
\begin{proof}
For fixed generator and encoder, the value function $V(\psi)$ with respect to the discriminator is
\begin{align}
    V(\psi) = \E_{p^*(x)r(y|x)}\log D_{\psi}(x, y) + \E_{p^*(x)p_{\theta, \varphi}(y|x)}\log(1 - D_{\psi}(x, y))
\end{align}
Let us introduce new variables and notations
\begin{align}
    t = (x, y), \quad p_1(t) = p^*(x)r(y|x), \quad p_2(t) = p^*(x)p_{\theta, \varphi}(y|x)
\end{align}
Then
\begin{align}
    V(\psi) = \E_{p_1(t)}\log D_{\psi}(t) + \E_{p_2(t)}\log(1 - D_{\psi}(t))
\end{align}
Using the results of the paper \citet{goodfellow2014generative} we obtain
\begin{align}
    D_{\psi^*}(t) = \dfrac{p_1(t)}{p_1(t) + p_2(t)} = \dfrac{p^*(x)r(y|x)}{p^*(x)r(y|x) + p^*(x)p_{\theta, \varphi}(y|x)} = \dfrac{r(y|x)}{r(y|x) + p_{\theta, \varphi}(y|x)}
\end{align}
\end{proof}

\subsection{Proof of Proposition~\ref{eq:reconstloss}}
\lblsec{reconstloss}
\reconstloss*
\begin{proof}
As in the paper \citet{goodfellow2014generative} we rewrite the value function $V(\theta, \varphi)$ for the optimal discriminator $D_{\psi^*}$ as follows
\begin{align}
    & V(\theta, \varphi) = \E_{p^*(x)r(y|x)}\log D_{\psi^*}(x, y) + \E_{p^*(x)p_{\theta, \varphi}(y|x)}\log(1 - D_{\psi^*}(x, y)) = \\
    & = \E_{p^*(x)r(y|x)}\log \dfrac{r(y|x)}{r(y|x) + p_{\theta, \varphi}(y|x)} + \E_{p^*(x)p_{\theta, \varphi}(y|x)}\log\dfrac{p_{\theta, \varphi}(y|x)}{r(y|x) + p_{\theta, \varphi}(y|x)} = \\
    & = \E_{p^*(x)}\left[\E_{r(y|x)}\log \dfrac{r(y|x)}{r(y|x) + p_{\theta, \varphi}(y|x)} + \E_{p_{\theta, \varphi}(y|x)}\log\dfrac{p_{\theta, \varphi}(y|x)}{r(y|x) + p_{\theta, \varphi}(y|x)}\right] = \\
    & = \E_{p^*(x)}\left[-\log(4) + 2\cdot JSD\left(r(y|x)\left\|p_{\theta, \varphi}(y|x)\right)\right.\right]
\end{align}
\end{proof}

\section{Extending PAGANs}
\lblsec{extending}
\subsection{$f$-divergence PAGANs}\label{fdiverge}
$f$-GANs \citep{nowozin2016f} are the generalization of GAN approach. \citet{nowozin2016f} introduces the model which minimizes the $f$-divergence $D_{f}$ \citep{ali1966general} between the true distribution $p^*(x)$ and the model distibution $p_{\theta}(x)$, i.e. it solves the optimization problem
\begin{align}
    \min_{\theta} D_f(p^*(x)\|p_{\theta}(x)) = \int p_{\theta}(x)f\left(\dfrac{p^*(x)}{p_{\theta}(x)}\right) dx
\end{align}
where $f: \mathbb{R}_+ \xrightarrow{} \mathbb{R}$ is a convex, lower-semicontinuous function satisfying $f(1) = 0$. 

The minimax game for $f$-GANs is defined as
\begin{align}
    \min_{\theta}\max_{\psi} V(\theta, \psi) = \E_{p^*(x)} T_{\psi}(x) - \E_{p_{\theta}(x)} f^*(T_{\psi}(x))
\end{align}
where $V(\theta, \psi)$ is a value function and $f^*$ is a Fenchel conjugate of $f$ \citep{nguyen2008estimating}. For fixed parameters $\theta$, the optimal $T_{\psi^*}(x)$ is $f'\left(\dfrac{p^*(x)}{p_{\theta}(x)}\right)$. Then the value function $V(\theta, \psi^*)$ for optimal parameters $\psi^*$ equals to $f$-divergence between the distributions $p^*$ and $p_{\theta}$ \citep{nguyen2008estimating}, i.e.
\begin{align}
    V(\theta, \psi^*) = D_f(p^*(x)\|p_{\theta}(x))
\end{align}

We can straightforwardly extend the definition of PAGANs model to $f$-PAGANs. We just introduce for each matching problem the $f$-GAN value function, i.e.
\begin{itemize}
    \item generator matching:
    \begin{align}
        & \min_{\theta}\max_{\psi^{(1)}} V_f^{(1)}(\theta, \psi^{(1)}) = \E_{p^*(x)} T_{\psi^{(1)}}(x) - \E_{p_{\theta}(x)} f^*(T_{\psi^{(1)}}(x)) \\ 
        & \theta^* = \arg\min_{\theta}V_f^{(1)}(\theta, \psi_*^{(1)}) = \arg\min_{\theta} D_f(p^*(x)\|p_{\theta}(x))
    \end{align}
    \item encoder matching:
    \begin{align}
        & \min_{\varphi}\max_{\psi^{(2)}} V_f^{(2)}(\varphi, \psi^{(2)}) = \E_{p_z(z)} T_{\psi^{(2)}}(z) - \E_{q_{\varphi}(z)} f^*(T_{\psi^{(2)}}(z)) \\
        & \varphi^* = \arg\min_{\varphi}V_f^{(2)}(\varphi, \psi_*^{(2)}) = \arg\min_{\varphi} D_f(p_z(z)\|q_{\varphi}(z))
    \end{align}
    \item reconstruction matching: 
    \begin{gather}
        \min_{\theta, \varphi} \max_{\psi} V_f(\theta, \varphi, \psi) = \E_{p^*(x)r(y|x)}T_{\psi}(x, y) - \E_{p^*(x)p_{\theta, \varphi}(y|x)}f^*(T_{\psi}(x, y)) \\
        \theta^*, \varphi^* = \arg\min_{\theta, \varphi}V(\theta, \varphi, \psi_*) = \arg\min_{\theta, \varphi} D_f(r(y|x)\|p_{\theta, \varphi}(y|x))
    \end{gather}
\end{itemize}

\subsection{Wasserstein PAGANs}
\citet{arjovsky17a} proposed WGANs model for minimizing the Wasserstein-1 distance between the distributions $p^*(x)$ and $p_{\theta}$, i.e.
\begin{align}
    \min_{\theta} W(p^*(x), p_{\theta}(x)) = \inf_{\gamma \in \Pi(p^*, p_{\theta})} \E_{(x, y) \sim \gamma}  \|x - y\|
\end{align}
Because the distance $W(p^*(x), p_{\theta}(x))$ is intractable they consider solving the Kantorovich-Rubinstein dual problem \citep{villani2008optimal}
\begin{align}
    \min_{\theta} W(p^*(x), p_{\theta}(x)) = \min_{\theta}\max_{\|f\|_L \leqslant 1} \left[\E_{p^*(x)} f(x) - \E_{p_{\theta}} f(x)\right]
\end{align}
As in Section \ref{fdiverge} we can easily extend the PAGANs model to WPAGANs. In each matching problem the corresponding distance between distributions will be Wasserstein-1 distance.

\section{Other models and experiment details}
\subsection{Training Wasserstein PAGAN}
As another concurrent approach to match implicit distributions we can use Wasserstein distance. Recent empirical works showed promising results \citep{gulrajani2017improvedwgan, gemici2018primal} and thus they are interesting to compare with. As mentioned above we still need a critic to work on pairs of images. Unlike GAN frameworks it is desirable to have a strong critic. A channel wise concatenation for pairs $(x, y)$ worked the best in sense of visual quality and training stability. As a default choice to improve Wasserstein distance optimization we applied the gradient penalty proposed in \cite{gulrajani2017improvedwgan}. To apply the gradient penalty for a critic on pairs we have to interpolate between pairs $(x, y)$ and $(x', y')$. There are still two choices:
\begin{itemize}
    \item shared alpha
    \begin{equation}
        (\tilde{x}, \tilde{y}) = (\alpha x + (1-\alpha)x', \alpha y + (1-\alpha)y'), \quad \alpha \sim \mathcal{U}[0, 1]
    \end{equation}
    \item independent alpha for each part
    \begin{equation}
        (\tilde{x}, \tilde{y}) = (\alpha_1 x + (1-\alpha_1)x', \alpha_2 y + (1-\alpha_2)y'), \quad \alpha_1, \alpha_2 \sim \mathcal{U}[0, 1]
    \end{equation}
\end{itemize}

Empirically we found no differences in results and in further experiments used shared alpha as a default choice. The gradient penalty strength parameter $\lambda$ was set to 10 as recommended by \cite{gulrajani2017improvedwgan}.
We used 10 discriminator steps per 1 generator/encoder step for WPAGAN to slightly improve quality in this setting, other parameters were unchanged. In Table \ref{table:wpagan.inception.scores} we present results for Wasserstein loss used instead of standard GAN objective in PAGAN model. While having good reconstructions this type of loss failed to achieve good generation results.

\begin{table}[!h]
\centering
\caption{Inception Score and Fr\'echet Inception Distance for Wasserstein PAGAN.}
	\label{table:wpagan.inception.scores}
  \begin{tabular}{ lccc }
    Model & FID & IS  & RIS\delim
    WPAGAN & 52.29 & 5.62 $\pm$ 0.09 & 13.44 $\pm$ 0.44
  \end{tabular}
\end{table}

\section{Images}
\subsection{PAGAN-L1  Visual Results}
\label{sec:app:images.l1}
\begin{figure}[h!]
    \centering
    \begin{subfigure}{0.49\linewidth}
    \includegraphics[width=\linewidth]{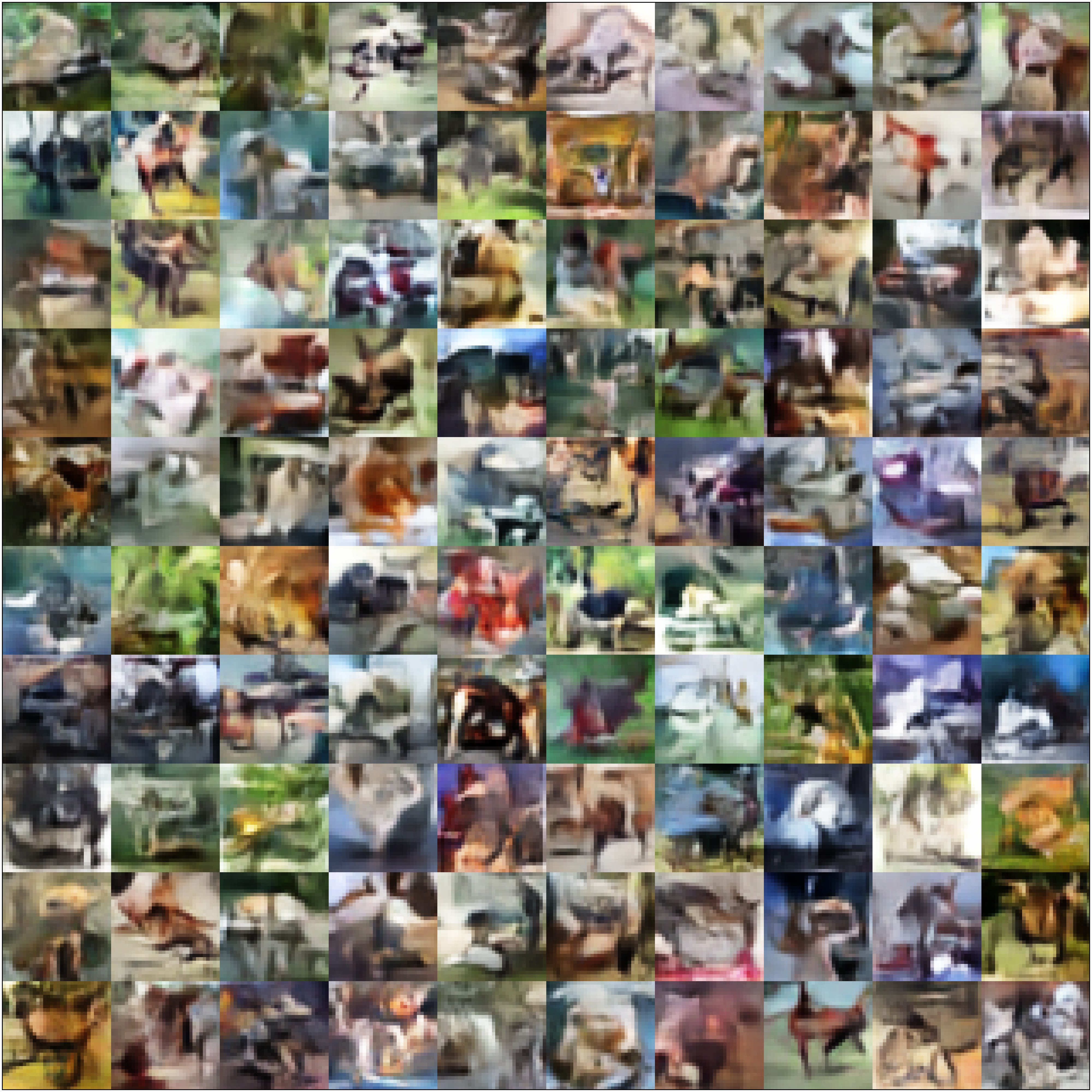}
    \caption{CIFAR10 samples from PAGAN-L1}
    \end{subfigure}%
    ~
    % \begin{subfigure}{0.49\linewidth}
    % \includegraphics[width=\linewidth]{img/435_w_cifar10_final_samples.pdf}
    % \caption{CIFAR10 samples from WPAGAN-L1}
    % \end{subfigure}
    \begin{subfigure}{0.49\linewidth}
    \includegraphics[width=\linewidth]{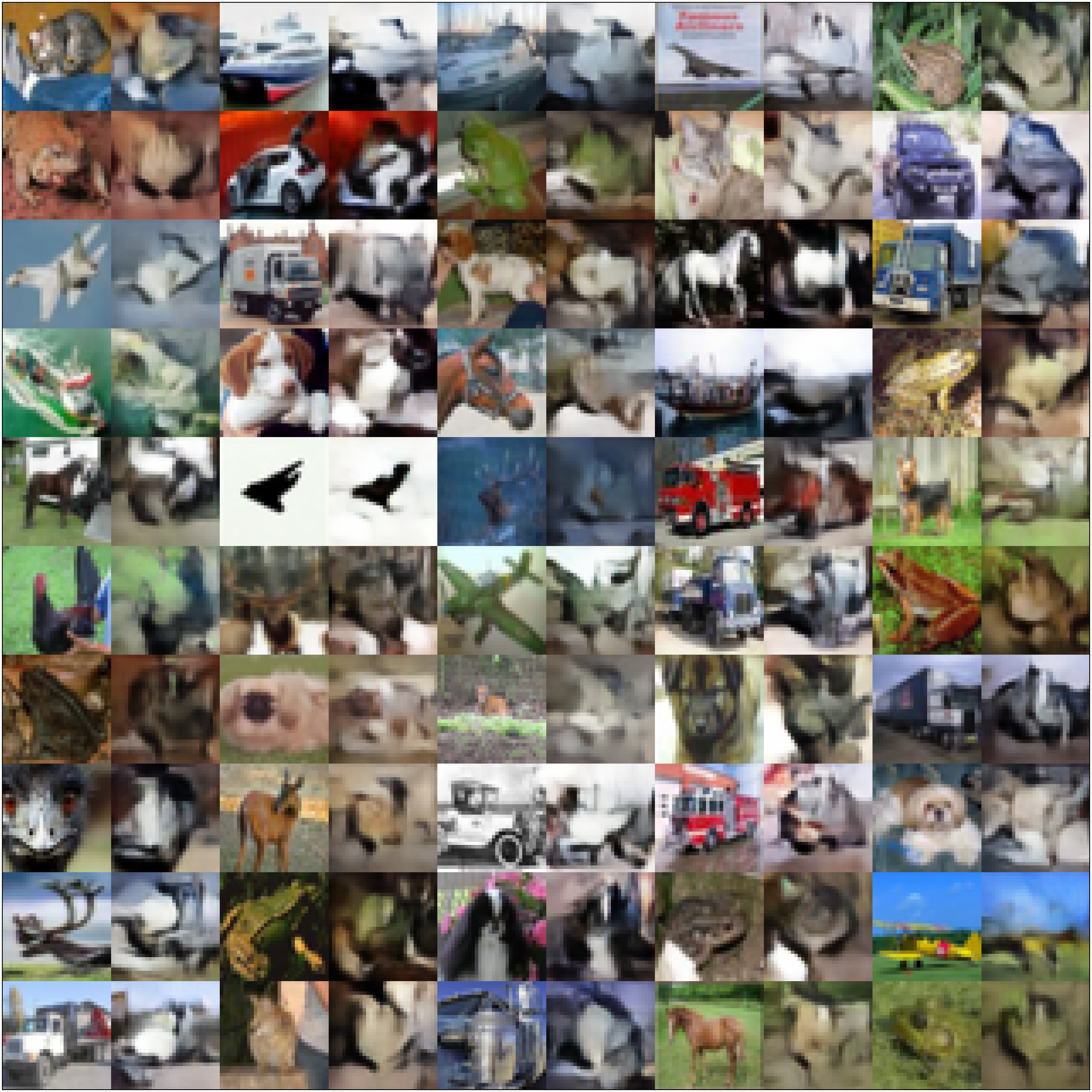}
    \caption{CIFAR10 reconstructions from PAGAN-L1}
    \end{subfigure}%
    % ~
    % \begin{subfigure}{0.49\linewidth}
    % \includegraphics[width=\linewidth]{img/435_w_cifar10_final_squared_recon.pdf}
    % \caption{CIFAR10 reconstructions from WPAGAN-L1}
    % \end{subfigure}
    \caption{Evaluation of Generator and Encoder trained on CIFAR10 dataset with adversarial loss replaced with $L_1$ loss. On plot (b) odd columns denote original images, even stand for corresponding reconstructions on test partition}
\end{figure}

\subsection{PAGAN-NOAUG  Visual Results}
\label{sec:app:images.noaug}
\begin{figure}[h!]
    \centering
    \begin{subfigure}{0.49\linewidth}
    \includegraphics[width=\linewidth]{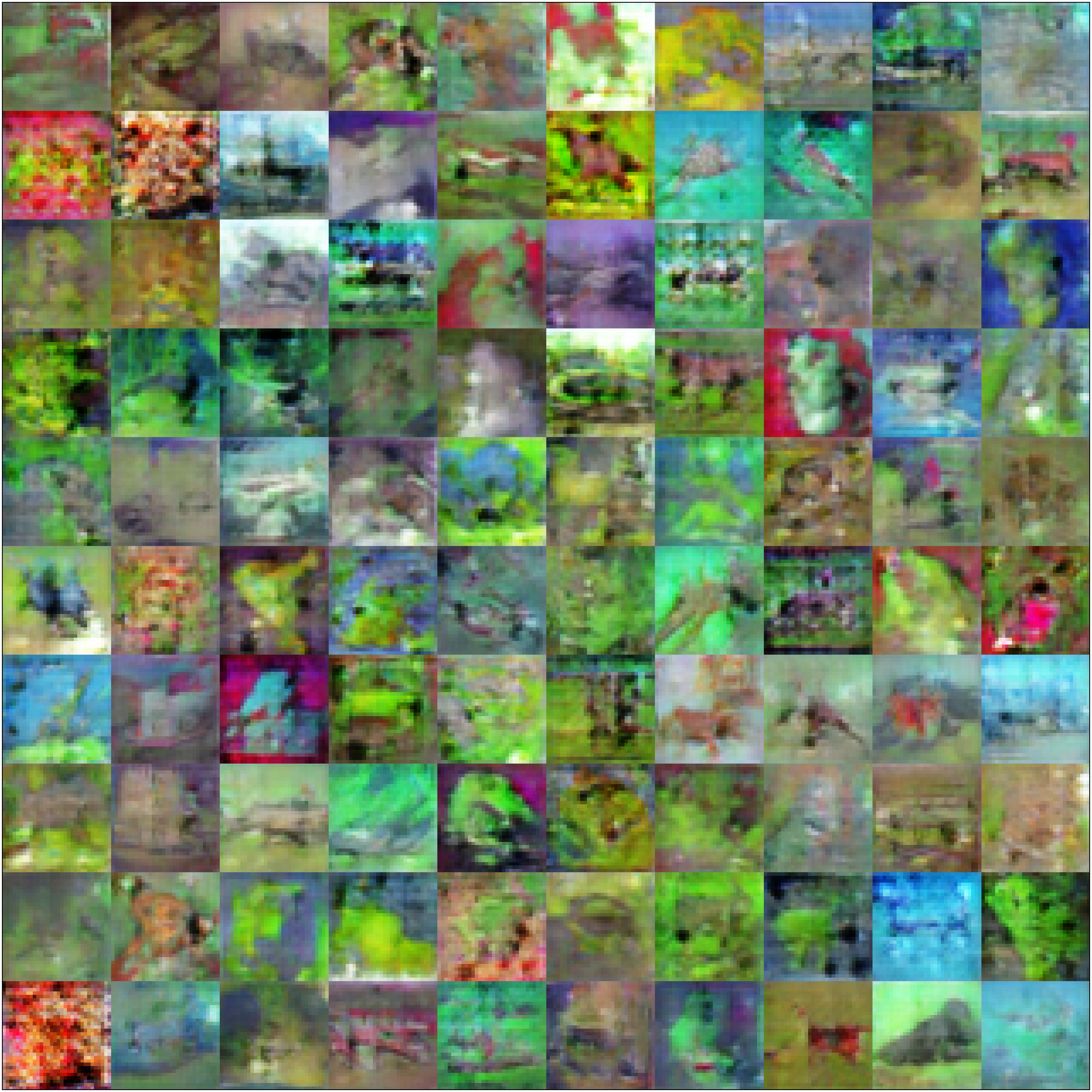}
    \caption{CIFAR10 samples from PAGAN-NOAUG}
    \end{subfigure}%
    ~
    % \begin{subfigure}{0.49\linewidth}
    % \includegraphics[width=\linewidth]{img/430_w_cifar10_final_samples.pdf}
    % \caption{CIFAR10 samples from WPAGAN-NOAUG}
    % \end{subfigure}
    \begin{subfigure}{0.49\linewidth}
    \includegraphics[width=\linewidth]{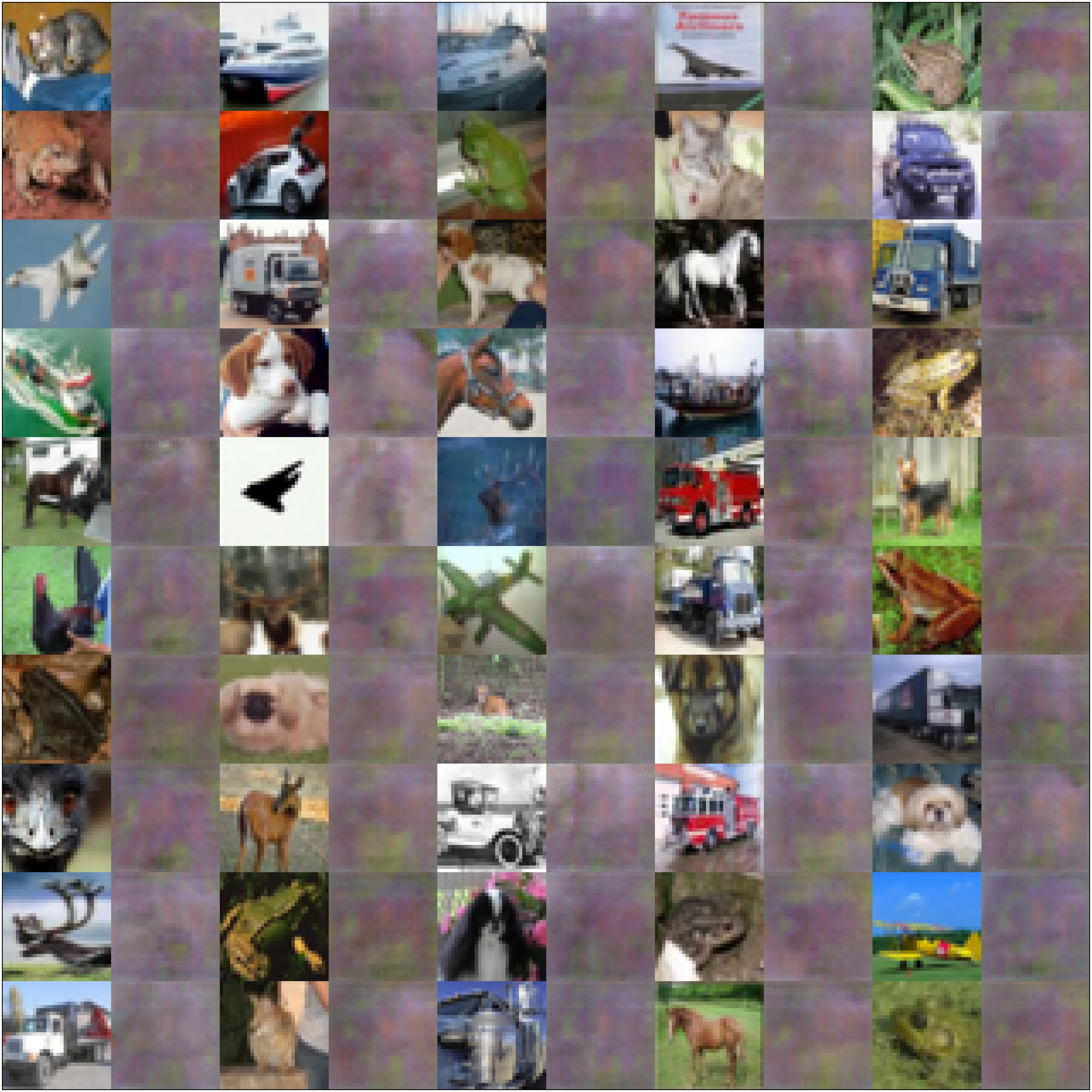}
    \caption{CIFAR10 reconstructions from PAGAN-NOAUG}
    \end{subfigure}%
    % ~
    % \begin{subfigure}{0.49\linewidth}
    % \includegraphics[width=\linewidth]{img/430_w_cifar10_final_squared_recon.pdf}
    % \caption{CIFAR10 reconstructions from WPAGAN-NOAUG}
    % \end{subfigure}
    \caption{Evaluation of Generator and Encoder trained on CIFAR10 dataset with removed augmentation. On plot (b) odd columns denote original images, even stand for corresponding reconstructions on test partition}
\end{figure}
\pagebreak

\subsection{PAGAN Visual Results}
\label{sec:app:images}
\begin{comment}
\begin{figure}[h!]
    \centering
    \begin{subfigure}{0.49\linewidth}
    \includegraphics[width=\linewidth]{}
    \caption{SVHN samples from PAGAN}
    \end{subfigure}%
    ~
    %\begin{subfigure}{0.49\linewidth}
    %\includegraphics[width=\linewidth]{img/412_w_svhn_final_samples.pdf}
    %\caption{SVHN samples from WPAGAN}
    %\end{subfigure}
    \begin{subfigure}{0.49\linewidth}
    \includegraphics[width=\linewidth]{}
    \caption{SVHN reconstructions from PAGAN}
    \end{subfigure}%
   % ~
   % \begin{subfigure}{0.49\linewidth}
  %  \includegraphics[width=\linewidth]{img/412_w_svhn_final_squared_recon.pdf}
  %  \caption{SVHN reconstructions from WPAGAN}
  %  \end{subfigure}
    \caption{Evaluation of Generator and Encoder trained on SVHN dataset. On plot (b) odd columns denote original images, even stand for corresponding reconstructions on test partition}
\end{figure}
\end{comment}
\begin{figure}[h!]
    \centering
    \begin{subfigure}{0.49\linewidth}
    \includegraphics[width=\linewidth]{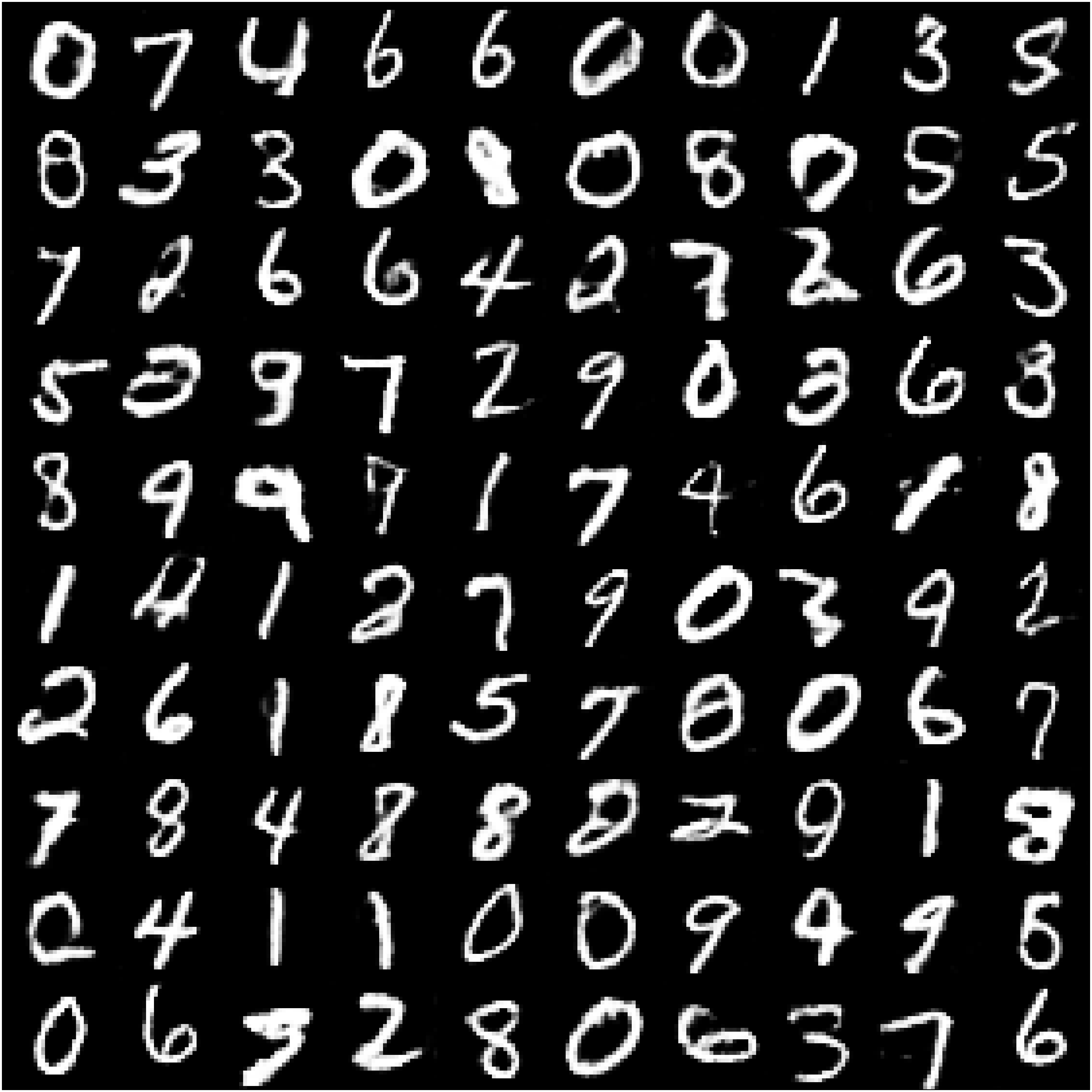}
    \caption{MNIST samples from PAGAN}
    \end{subfigure}%
    ~
   % \begin{subfigure}{0.49\linewidth}
  %  \includegraphics[width=\linewidth]{img/421_w_mnist_final_samples.pdf}
   % \caption{MNIST samples from WPAGAN}
   % \end{subfigure}
    \begin{subfigure}{0.49\linewidth}
    \includegraphics[width=\linewidth]{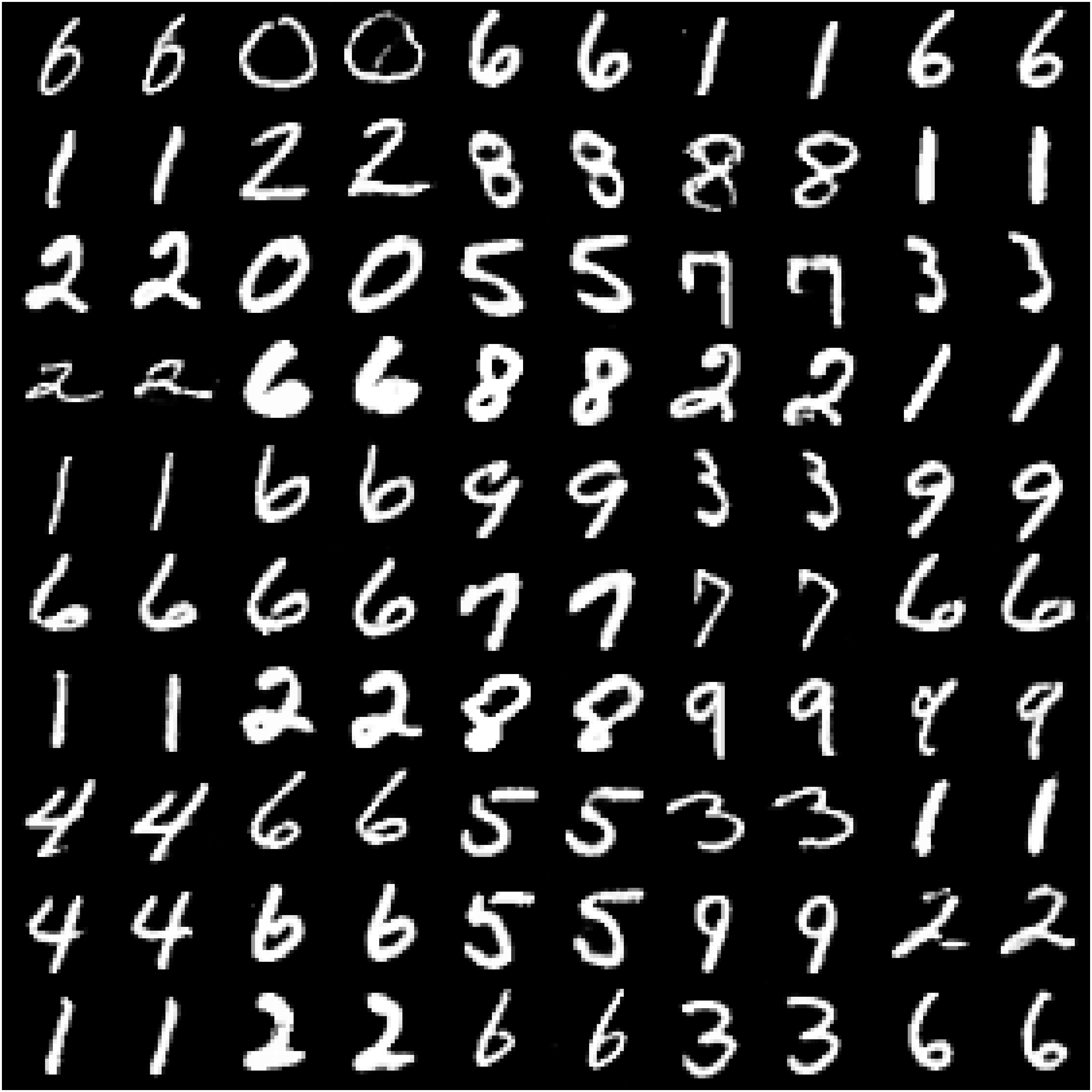}
    \caption{MNIST reconstructions from PAGAN}
    \end{subfigure}%
    % ~
   % \begin{subfigure}{0.49\linewidth}
   % \includegraphics[width=\linewidth]{img/421_w_mnist_final_squared_recon.pdf}
   % \caption{MNIST reconstructions from WPAGAN}
   % \end{subfigure}
    \caption{Evaluation of Generator and Encoder trained on MNIST dataset. On plot (b) odd columns denote original images, even stand for corresponding reconstructions on test partition}
\end{figure}
\begin{figure}[h!]
    \centering
    \begin{subfigure}{0.49\linewidth}
    \includegraphics[width=\linewidth]{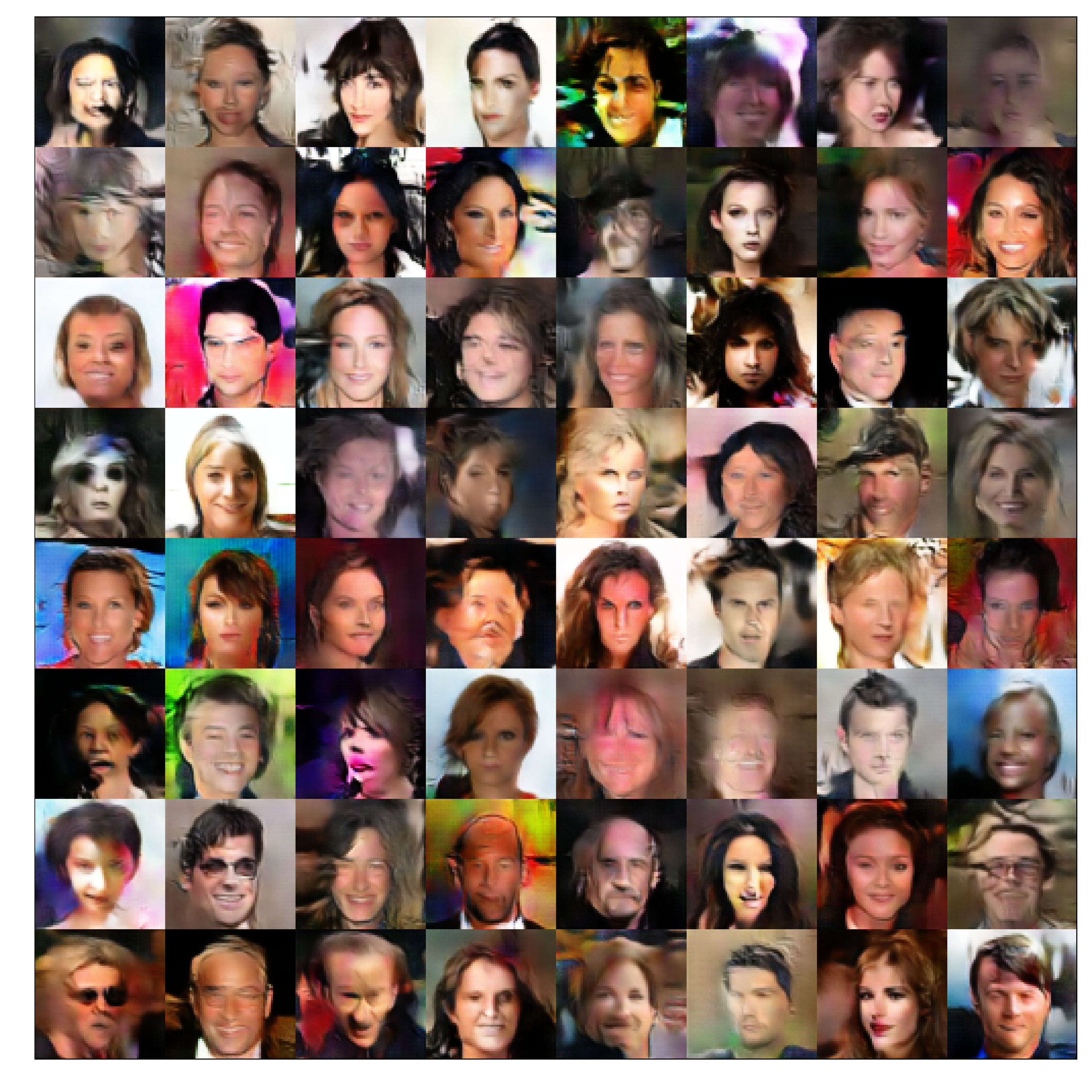}
    \caption{celebA samples from PAGAN}
    \end{subfigure}%
    % ~
    % \begin{subfigure}{0.49\linewidth}
    % \includegraphics[width=\linewidth]{img/423_w_celeba_final_samples.pdf}
    % \caption{celebA samples from WPAGAN}
    % \end{subfigure}
    \begin{subfigure}{0.49\linewidth}
    \includegraphics[width=\linewidth]{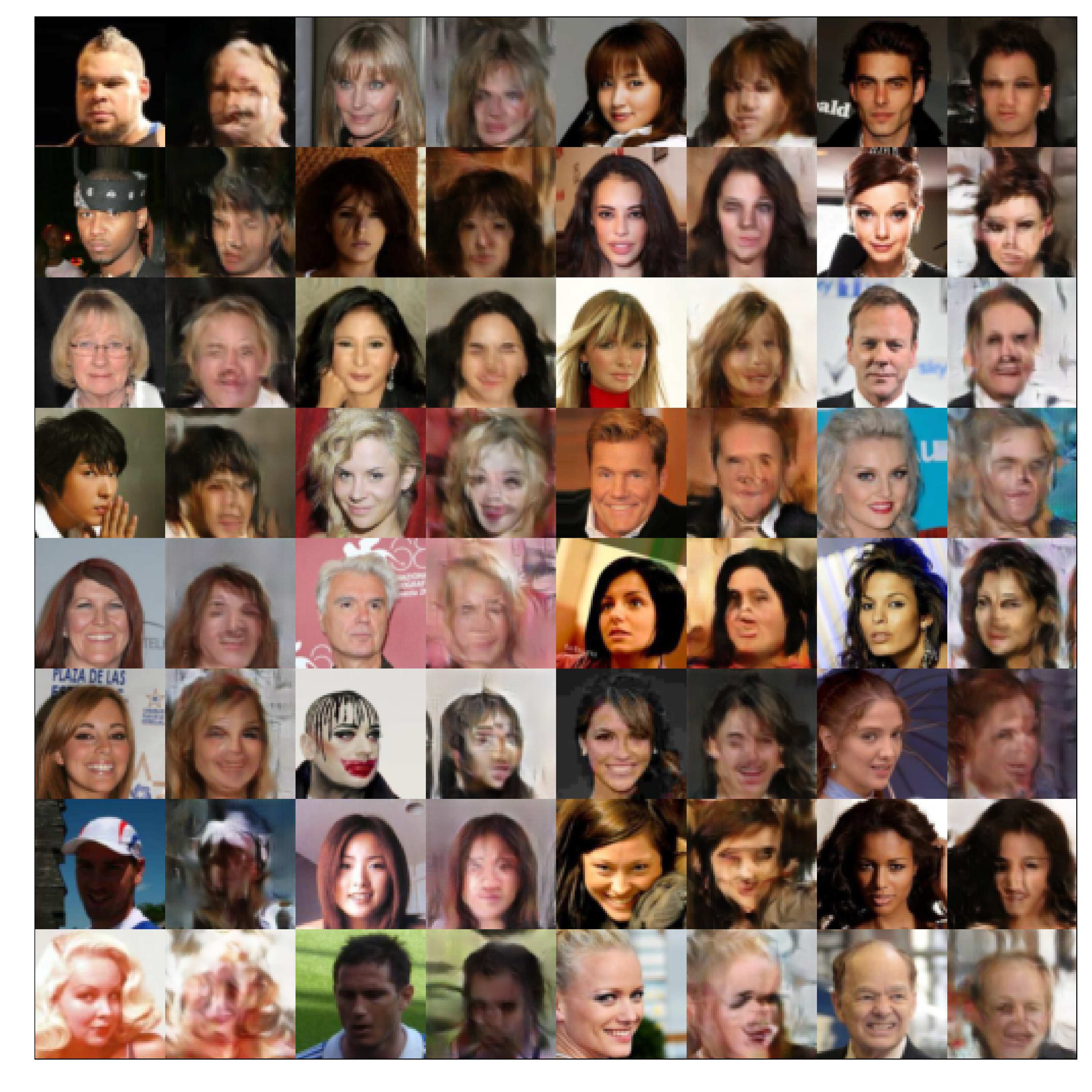}
    \caption{celebA reconstructions from PAGAN}
    \end{subfigure}%
    % ~
    % \begin{subfigure}{0.49\linewidth}
    % \includegraphics[width=\linewidth]{img/423_w_celeba_final_squared_recon.pdf}
    % \caption{celebA reconstructions from WPAGAN}
    % \end{subfigure}
    \caption{Evaluation of Generator and Encoder trained on celebA dataset. On plot (b) odd columns denote original images, even stand for corresponding reconstructions on test partition}
\end{figure}

\pagebreak
\subsection{VAE for Reconstruction Inception Score}
\label{sec:app:images.vae}
\begin{figure}[h!]
    \centering
    \includegraphics[width=0.8\linewidth]{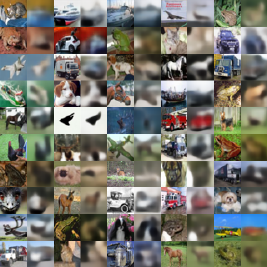}
    \caption{Reconstructions from VAE used to compute RIS}
    \label{fig:vae}
\end{figure}

\end{appendices}

\end{document}